\theoremstyle{plain}
\newtheorem{theorem}{Theorem}%[section]
\newtheorem{definition}{Definition}
\newtheorem{lemma}{Lemma}
\def\x{{\bf x}}
\def\tPsi{{\tilde{\Psi}}}
\def\tPhi{{\tilde{\Phi}}}
\def\hL{{\hat{L}}}
\def\tL{{\tilde{L}}}
\def\hl{{\hat{\ell}}}
\def\vone{{\bf{1}}}
\DeclareMathOperator{\KL}{KL}
\def\cD{{\mathcal{D}}}
\def\cH{{\mathcal{H}}}
\def\cI{{\mathcal{I}}}
\def\cS{{\mathcal{S}}}
\def\cX{{\mathcal{X}}}
\def\BE{{\mathbb{E}}}
\def\BP{{\mathbb{P}}}
\def\BR{{\mathbb{R}}}
\def\xtij{x_{t,i,j}}
\def\ltij{\hl_{t,i,j}}
\newcommand{\Conv}{\mathrm{Conv}}
\DeclareMathOperator*{\argmax}{arg\,max}
\DeclareMathOperator*{\argmin}{arg\,min}
\newcommand{\inner}[2]{\langle#1, #2\rangle}
\newcommand{\bOne}[1]{\mathds{1}\left\{#1\right\}}
\newcommand{\EE}[2]{\mathbb{E}_{#1} \left[#2\right]}
\newcommand{\Ber}{\mathrm{Ber}}
\mathchardef\mhyphen="2D
\newcommand{\ftrlpbm}{{\tt FTRL\mhyphen PBM}}
\title{Simultaneously Learning Stochastic and Adversarial Bandits under the Position-Based Model}
\author{
    Cheng Chen\textsuperscript{\rm 1},
    Canzhe Zhao\textsuperscript{\rm 2},
    Shuai Li \textsuperscript{\rm 2}\thanks{Corresponding author}
    %Authors
    % All authors must be in the same font size and format.
    %Written by AAAI Press Staff\textsuperscript{\rm 1}\thanks{With help from the AAAI Publications Committee.}\\
    %AAAI Style Contributions by Pater Patel Schneider,
    %Sunil Issar,\\
    %J. Scott Penberthy,
    %George Ferguson,
    %Hans Guesgen,
    %Francisco Cruz\equalcontrib,
    %Marc Pujol-Gonzalez\equalcontrib
}
\title{My Publication Title --- Single Author}
\author {
    Author Name
}
\title{Simultaneously Learning Stochastic and Adversarial Bandits under the Position-Based Model}
\author {
    % Authors
    Cheng Chen,\textsuperscript{\rm 1}
    Canzhe Zhao, \textsuperscript{\rm 2}
    Shuai Li \textsuperscript{\rm 2}
}
\begin{document}

\maketitle
\setcounter{secnumdepth}{0}

\begin{abstract}
Online learning to rank (OLTR) interactively learns to choose lists of items from a large collection based on certain click models that describe users' click behaviors. Most recent works for this problem focus on the stochastic environment where the item attractiveness is assumed to be invariant during the learning process. In many real-world scenarios, however, the environment could be dynamic or even arbitrarily changing. This work studies the OLTR problem in both stochastic and adversarial environments under the position-based model (PBM). We propose a method based on the follow-the-regularized-leader (FTRL) framework with Tsallis entropy and develop a new self-bounding constraint especially designed for PBM. We prove the proposed algorithm simultaneously achieves $O(\log{T})$ regret in the stochastic environment and $O(m\sqrt{nT})$ regret in the adversarial environment, where $T$ is the number of rounds, $n$ is the number of items and $m$ is the number of positions. We also provide a lower bound of order $\Omega(m\sqrt{nT})$ for adversarial PBM, which matches our upper bound and improves over the state-of-the-art lower bound. 
The experiments show that our algorithm could simultaneously learn in both stochastic and adversarial environments and is competitive compared to existing methods that are designed for a single environment.
\end{abstract}

\section{Introduction}
Learning to rank is widely used in online web search and recommender systems which selects a small group of items to present in a limited number of positions after a user starts a search session \cite{liu2011learning}. Online learning to rank (OLTR) is to learn the best ranking policy through user interactions and aims to maximize user satisfaction, \emph{e.g.} the number of user clicks, during the learning period. To understand the click signals received from users on given ranked lists of items, many click models are introduced and studied \cite{chuklin2015click}. One of the most popular click models adopted in the industry is the position-based model (PBM) \cite{richardson2007predicting} due to its simplicity and effectiveness to characterize the click rate as a product of item attractiveness and position bias. PBM is studied in OLTR setting with theoretical analysis on regret \cite{lagree2016multiple,komiyama2017position}, which is in expectation the difference of the received clicks from the clicks of the best policy. Some other works in OLTR study the cascade model \cite{kveton2015cascading,li2016contextual,zong2016cascading} and general click model \cite{zoghi2017online,lattimore2018toprank,li2019online}.

Most existing works in OLTR focus on the stochastic environment where the item attractiveness and position examination probabilities, if any, are assumed to be fixed through the learning process. However, this usually is a strong assumption in real applications where the item attractiveness could change dynamically, like the clothes interest of users might periodically change across seasons. The algorithms designed in the stochastic environment might fail to converge if the stochastic assumptions do not hold. This motivates the study of adversarial environment where the involved samples are arbitrary on a bounded domain. Usually the regret guarantee of algorithms designed under adversarial environment can only be $O(\sqrt{T})$, even in the stochastic environment whose best algorithm can achieve a much better regret of $O(\log(T))$. It is an interesting topic in online learning if there is an algorithm that can achieve $O(\sqrt{T})$ regret if run in the adversarial environment and $O(\log(T))$ regret if run in the stochastic environment. This problem is also called best-of-both-worlds (BOBW). Some works study this problem in classical multi-armed bandit problem (MAB) \cite{bubeck2012best,seldin2014one,auer2016algorithm,seldin2017improved,wei2018more,zimmert2019optimal} and combinatorial MAB (CMAB) with semi-bandit feedback \cite{zimmert2019beating}. It is an open question if we can design BOBW algorithms in OLTR whose adversarial formulation needs to be well deliberated first. In this work we hope to answer this question under the commonly adopted PBM.

We propose an algorithm for OLTR under PBM based on the follow-the-regularized-leader (FTRL) framework to simultaneously learn in stochastic and adversarial environments. Though OLTR under PBM can be formulated as a special case of CMAB by regarding the pair of an item and a position as a base arm, the direct application of existing studies does not hold. One of the main challenges is that the commonly defined suboptimality gap could be negative in the PBM setting, making it impossible to follow the existing \textit{self-bounding} technique in \cite{zimmert2019beating,zimmert2019optimal,wei2018more}. For this, we deliberately design a suboptimality gap, which is non-trival and quite different from the commonly defined suboptimality gap. We also build a new form of self-bounding constraint for PBM based on the property of the proposed suboptimality gap. Also, the structure of PBM could have Tsallis entropy as the potential function and the corresponding regularized leader can be computed efficiently, compared with the hybrid regularizer in the previous work \cite{zimmert2019beating}. We prove our algorithm could achieve $O(\log T)$ regret in the stochastic environment and $O(\sqrt{T})$ regret in the adversarial environment, verifying its simultaneous learning ability in both environments. 
% \zhao{Do we need to mention our optimization technique to implement?}

Furthermore, we provide a regret lower bound $\Omega(m\sqrt{nT})$ for OLTR under PBM. This improves the state-of-the-art lower bound $\Omega(\sqrt{mnT})$ \cite{lattimore2018toprank} which is analyzed under document-based model, a special case of PBM. Our lower bound matches our upper bound, showing the optimality of both. %This lower bound also closes the existing gap $O(\sqrt{m})$ between the upper and lower bounds for OLTR under general click models, proving the optimality of \cite{li2019online}.
Table \ref{table:full comparisons with related work} shows a full comparison of our work with most related works. 

The experiments show that our algorithm outperforms the baselines in adversarial environments while is competitive with \texttt{TopRank} \cite{lattimore2018toprank} and \texttt{PMED} \cite{komiyama2017position} in stochastic environments. The results show the simultaneous learning ability of our algorithm in both environments.

\begin{table*}[tbh!]

\centering
\small
%\begin{tabular}{p{3.3cm}|l|p{5cm}|p{3cm}}
\renewcommand{\arraystretch}{1}\
% \vspace{-0.2cm}
\begin{tabularx}{\textwidth}{@{}XXXp{3cm}}
\toprule
&Regret Bound \newline (Stochastic) &Regret Bound \newline(Adversarial) &Original Model \\
\midrule
\citet{kale2010non} &- &$\displaystyle O\left(m\sqrt{nT\log(n)}\right)$  &Bandits for Ordered Slates \\
\midrule
\citet{bubeck2012regret} &- &$\displaystyle O\left(m\sqrt{nT}\right)$  & CMAB with semi-bandit feedback \\
\midrule
\citet{lagree2016multiple} &$\displaystyle O\left(\frac{n}{\beta_m\Delta}\log(T)\right)$  &-  &PBM with known position bias \\
% \midrule
% \citet{komiyama2017position} &$\displaystyle O\left(\frac{n}{\Delta_\beta \Delta}\log(T)\right)$ &-  &PBM with unknown position bias \\
\midrule
\citet{zoghi2017online} &$\displaystyle O\left(\frac{m^3 n}{\Delta}\log(T)\right)$ &- &General Click Model\\
\midrule
\citet{lattimore2018toprank}&$\displaystyle O\left(\frac{m n}{\Delta}\log(T)\right)$ \newline $\displaystyle  O\left(\sqrt{m^3 n T \log(T)}\right)$ \newline $\displaystyle \Omega\left(\sqrt{mnT}\right)$ 
% \newline $\cO\left(\sum\limits_{i=1}^n\sum\limits_{j=1}^{\min\{m,i-1\}}\frac{\log(T)}{\alpha_j-\alpha_i}\right)$ 
&$\displaystyle \Omega\left(\sqrt{mnT}\right)$ &General Click Model\\
\midrule
\citet{li2019online} &$\displaystyle O\left(m \sqrt{nT\log(nT)}\right)$ &- &General Click Model with Linear Features \\
\midrule
\citet{zimmert2019beating} &$\displaystyle O\left(\frac{m^2n}{\Delta_\beta \Delta}\log(T)\right)$ &$\displaystyle O\left(m\sqrt{nT}\right)$ &CMAB with semi-bandit feedback \\
\midrule
Ours &$\displaystyle O\left(\frac{m n}{\Delta_\beta \Delta}\log(T)\right)$
% \newline$\displaystyle O\left(\sum\limits_{i=1}^n\sum\limits_{j=1}^{\min\{m,i-1\}}\frac{\log(T)}{\delta(\alpha_j-\alpha_i)}\right)$ 
% \newline $\Omega\left(m\sqrt{nT}\right)$  
&$\displaystyle O\left(m\sqrt{nT}\right)$ \newline $\Omega\left(m\sqrt{nT}\right)$ & \\
\bottomrule
\end{tabularx}
\caption{This table compares regret bounds of related works when their results are applied to OLTR under PBM for both stochastic and adversarial environments. $T$ is the number of total rounds, $m$ is the number of positions and $n$ is the number of items. 
% $\alpha_i$ is the expected click rate of item $i$ with $\alpha_1 > \alpha_2 > \cdots > \alpha_m >\alpha_{m+1} \ge \alpha_{m+2} \ge \cdots \ge \alpha_n$ and $\beta_j$ is the position bias of position $j$ with $\beta_1 > \beta_2 > \cdots > \beta_m > \beta_{m+1} = 0$. 
$\Delta$ is the minimal gap between the attractiveness of the best $m$ items. $\Delta_\beta$ is the minimal gap between position examination probabilities.
}\label{table:full comparisons with related work}
% \vspace{-0.15cm}
\end{table*}

% \vspace{-0.1cm}
\paragraph{Related Work}
The study of OLTR under PBM has received many interests. For the stochastic environment, \citet{lagree2016multiple} studies PBM but assumes the position examination probabilities are known or could be pre-computed from historical data. This assumption is a bit unrealistic and does not account for possible drift of position bias. \citet{komiyama2017position} remove this requirement but only provides an asymptotic regret bound. With rank-1 structure, PBM with unknown position bias can also be solved using methods in rank-1 bandits \cite{katariya2017stochastic} though their setting is originally designed to select one item each round. Some works study a general class of click models with PBM as a special case \cite{zoghi2017online,lattimore2018toprank,li2019online}. They distill a set of assumptions that are satisfied by common click models including the cascade model and PBM. The algorithms designed on this general click model are more robust than that on PBM. All the above algorithms study only the stochastic environment and might be brittle when the stochastic assumption is violated.

For the adversarial environment, PBM is first studied by \citet{kale2010non} as an ordered slate model. They solve it by a variant of multiplicative-weights algorithm and prove a regret upper bound $O(m\sqrt{n\log(n)T})$, $O(\sqrt{\log(n)})$ worse than ours. \citet{bubeck2012regret} show that OSMD with a $0$-potential function can achieve $O(m\sqrt{nT})$ regret, but their method need to know the time horizon. \citet{radlinski2008learning} study a ranked bandit problem using the greedy idea to select items one-by-one, which can only give approximation guarantees. It is extended to metric space by considering item contexts \cite{slivkins2013ranked}. Other studies include online optimization over the permutahedron \cite{ailon2014improved,ailon2016bandit}, which corresponds to PBM with $m=n$ and PBM with full-information feedback \cite{cohen2015following}. 
% All these works do not provide improved convergence guarantees for stochastic environments. \shuai{check this sentence} %\chen{\citet{slivkins2013ranked}

The model of OLTR under PBM can be regarded as a special case of combinatorial semi-bandits \cite{gai2012combinatorial,chen2013combinatorial,kveton2015tight,combes2015combinatorial,combes2017minimal,zimmert2019beating,neu2013efficient,neu2015first,audibert2014regret} with specific combinatorial constraints. Most existing works study either stochastic or adversarial environment. 
% However, these existing methods do not always utilize the structure of PBM to get efficient and tight results.

For the BOBW algorithms, many study this topic for MAB \cite{bubeck2012best,seldin2014one,auer2016algorithm,seldin2017improved,wei2018more} where \citet{zimmert2019optimal} show that FTRL with $\frac{1}{2}$-Tsallis entropy can achieve optimal regret bounds for both stochastic and adversarial environments. \citet{zimmert2019beating} study combinatorial semi-bandits by a novel hybrid regularizer but only show the optimality in two special cases for the stochastic environment, full combinatorial set and $m$-set. Other BOBW works include prediction with expert advice \cite{koolen2016combining,mourtada2019optimality}, linear bandits \cite{lee2021achieving}, online convex optimization \cite{cutkosky2017stochastic} and Markov decision process \cite{jin2020simultaneously}. Our work focuses on the BOBW under PBM.

\section{Setting}\label{sec:setting2}

This section introduces both stochastic and adversarial environments of OLTR under PBM.

Suppose there are $n$ items with item set $[n]=\{1,2,\dots,n\}$ and $m$ positions ($m\leq n$). In each round $t$, the learner selects an ordered list $I_t=(i_{t,1},i_{t,2},\dots,i_{t,m})$ consisting of $m$ distinct items, where $i_{t,j}$ denotes the item placed at position $j$ in round $t$. Note that this problem can be formulated as a special case of combinatorial semi-bandits and the action list $I_t$ can be written as a subpermutation matrix $X_t\in\cX$ where
% \begin{small}
% \begin{equation*}
%     \cX=\set{X\in\{0,1\}^{n\times m}~\Bigg|~\sum_{i=1}^n X_{i,j}=1, \forall j\in[m];~ \sum_{j=1}^m X_{i,j}\leq 1, \forall i\in[n]}
% \end{equation*}
% \end{small}
\begin{align*}
    \cX=\left\{X\in\{0,1\}^{n\times m}~\Bigg|~\sum_{i=1}^n X_{i,j}=1, \forall j\in[m];\right.\\
    \left.~ \sum_{j=1}^m X_{i,j}\leq 1, \forall i\in[n]\right\}
\end{align*}
is the action set and $X_{i,j}$ denotes whether to put item $i$ on position $j$.

\citet{lattimore2020bandit} introduce the adversarial setting of PBM as follows. For each round $t$ and position $k\in [m]$, the environment secretly chooses $S_{t,k}$ as subset of $[n]$. The reward of round $t$ is defined as $r_t=\sum_{k=1}^m \bOne{i_{t,k}\in S_{t,k}}$ where $I_t$ is the selected action list at time $t$. The feedback is the positions of the clicked items. Notice that this model can be reformulated as a combinatorial semi-bandit problem. At round $t$, the environment secretly chooses all loss matrices $\ell_t\in\{0,1\}^{n\times m}$ for any $t$ before the game where $\ell_{t,i,j} = 0$ means there is no loss (or there is a click) if placing item $i$ at position $j$ in round $t$. After selecting $I_t$, the algorithm receives a loss of $\inner{X_t}{\ell_t}$ and observes semi-bandit feedback $\ell_{t,i,j}$ for those $(i,j)$ such that $X_{t,i,j}=1$. The goal of the algorithm is to minimize the expected cumulative pseudo-regret
\begin{align}
    \label{eq:regret def}
    R(T) = \EE{}{\sum_{t=1}^T \inner{X_t - x^\ast}{\ell_t}}\,,
\end{align}
where $x^* \in \argmin_{x\in\mathcal{X}} \EE{}{\sum^T_{t=1} \inner{x}{\ell_t}}$ is the best action and the expectation is taken over the randomness of both the algorithm and the environment.

For the stochastic environment, each item $i\in[n]$ is associated with an (unknown) attractiveness $\alpha_i\in [0,1]$ and each position $j\in[m]$ is associated with an (unknown) examination probability $\beta_j\in (0,1]$. Without loss of generality, we assume $\alpha_1 > \alpha_2 > \cdots > \alpha_m >\alpha_{m+1} \ge \alpha_{m+2} \ge \cdots \ge \alpha_n$ and $\beta_1 > \beta_2 > \cdots > \beta_m > \beta_{m+1} = 0$. Let $\cH_{t}$ 
% \zhao{$\cH_{t}=\sigma(\ell_1, X_1, \ldots, \ell_t, X_t)$} 
be the $\sigma$-algebra  containing all the history $(\ell_1, X_1, \ldots, \ell_t, X_t)$ by the end of round $t$. In the stochastic environment, all elements in the loss matrix $\ell_t$ are $\cH_{t-1}$-conditionally independent whose $(i,j)$-entry is drawn from Bernoulli distribution $\Ber(1-\alpha_i \beta_j)$ like previous works \cite{komiyama2017position,lagree2016multiple,chuklin2015click}. In this case, $x_{i,j}^\ast$ is actually $\delta_{i,j}$ which is $1$ if and only if $i=j$. The goal of the algorithm is also to minimize the expected cumulative pseudo-regret Eq. \eqref{eq:regret def}.

\paragraph{Notations} Throughout this paper, we use $I_j^*$ to denote the item selected by the best action $x^\ast$ at position $j$, or $x^\ast_{I_j^\ast,j}=1$.
For a given set $\mathcal{X}$, let $\mathds{1}\{\mathcal{X}\}$  be the indicator function and $\mathcal{I}_{\mathcal{X}}(x)$ be the characteristic function which is $\infty$ if $x\notin \mathcal{X}$ and $0$ otherwise. Let $\Conv(\cX)$ be the convex hull of $\cX$. 
We use $\vone_n$ to denote the $n$-dimensional vector whose entries are all $1$s.
The conditional expectation $\EE{}{\cdot \mid \cH_{t-1}}$ is abbreviated as $\EE{t}{\cdot}$. For the stochastic environment, let $\Delta=\min_{i\in[m]}\{\alpha_i-\alpha_{i+1}\}$ be the minimal gap between the attractiveness of top $m$ items and  $\Delta_\beta=\min_{j\in[m]}\{\beta_j-\beta_{j+1}\}$ be the minimal gap between any two position examination probabilities.
\section{Algorithm}\label{sec:algo}

\begin{algorithm}[tb]
\caption{FTRL-PBM}
\label{alg:ftrlpbm}
\textbf{Input}: $\cX$.
\begin{algorithmic}[1]
\STATE $\hL_0=\mathbf{0}_{n\times m}$, $\eta_t=1/(2\sqrt{t})$. \label{algline1}
\FOR{$t=1,\dots,T$}
    \STATE Compute 
    \begin{align*}
        x_t=\argmin_{x\in\Conv(\cX)}\langle x, \hL_{t-1}\rangle + \frac{1}{\eta_t}\Psi(x)\,;
    \end{align*} \label{algline3}
    \STATE Sample $X_t\sim P(x_t)$; \label{algline4}
    \STATE Observe $\ell_{t,i,j}$ for those $(i,j)$-th entries satisfying $X_{t,i,j}=1$;
    \STATE Compute the loss estimator $\hat{\ell}_{t,i,j}=\frac{\ell_{t,i,j} \cdot \bOne{X_{t,i,j}=1}}{\xtij}$; \label{algline6}
    \STATE Compute $\hat{L}_t=\hat{L}_{t-1}+\hl_t$. \label{algline7}
\ENDFOR
\end{algorithmic}
\end{algorithm}

This section presents our main algorithm, $\ftrlpbm$, in Algorithm \ref{alg:ftrlpbm} for both stochastic and adversarial environments under PBM. Our $\ftrlpbm$ algorithm follows the general follow-the-regularized leader (FTRL) framework, whose main idea is to follow the action which minimizes the regularized cumulative loss of the past rounds. Since the complete loss vectors cannot be observed in the bandit setting, usually an unbiased estimator $\hat{\ell}_{t}$ satisfying $\EE{t}{\hat{\ell}_{t}}=\ell_t$ would serve as a surrogate.

Specifically, our algorithm $\ftrlpbm$ keeps track of the cumulative estimated loss $\hL_{t}=\sum^{t}_{s=1} \hat{\ell}_s \in \mathbb{R}_+^{n \times m}$ and initializes it as a zero vector (line \ref{algline1}). At each round $t$, $\ftrlpbm$ first computes a regularized leader $x_t$ in the convex hull of the action set $\Conv(\cX)$ by
\begin{align} 
    \label{eq:main}
    x_t=\argmin_{x\in\Conv(\cX)} \inner{x}{\hL_{t-1}} + \frac{1}{\eta_t}\Psi(x)
\end{align}
where $\Psi(x)$ is the regularizer (line \ref{algline3}). Here we take the $1/2$-Tsallis entropy
\begin{align*}
% \label{eq:regularizer}
    \Psi(x)= \sum_i-\sqrt{x_i} 
    % \label{eq:regularizer}
\end{align*}
as our regularizer, which is shown optimal for BOBW MAB \cite{zimmert2019optimal}. For BOBW semi-bandits, the optimal algorithm adopts a hybrid regularizer \cite{zimmert2019beating}, which  is complicated and may be not efficient for PBM. 

% Discussions about the challenges are put in Section \ref{sec:discussions}.

% It is also part of the hybrid regularizer adopted in the work for BOBW CMAB with semi-bandits \cite{zimmert2019beating}. It is not direct to prove the effectiveness of Tsallis entropy for PBM problem and we will discuss about its challenges in \shuai{xxx}.

Then $\ftrlpbm$ samples an action $X_t \sim P(x_t)$ from $\cX$ (line \ref{algline4}) where $P(x_t)$ satisfies $\EE{X\sim P(x_t)}{X}=x_t$. We follow previous works \cite{kale2010non,helmbold2009learning} to construct $P(x)$. The method is to first complete matrix $x_t$ into  a doubly stochastic matrix $M_t$, which is a convex combination of permutation matrices by Birkhoff's theorem, and then decompose matrix $M_t$ into its convex combination of at most $n^2$ permutation matrices by Algorithm 1 of \cite{helmbold2009learning}. The time complexity of whole sampling procedure is of order $O(n^{4.5})$ and the details can be found in Appendix \ref{app:opt}.

After observing the semi-bandit feedback for the selected action $X_t$, we can construct the unbiased estimator for the loss vector as
\begin{align*}
    \hat{\ell}_{t,i,j}=\frac{\ell_{t,i,j} \cdot \bOne{X_{t,i,j}=1}}{\xtij}
\end{align*}
for the $(i,j)$-th entry (line \ref{algline6}). Then the cumulative estimated loss $\hL$ is updated (line \ref{algline7}).

%Here $y_{t}=X_{t} \circ \ell_{t}$ represents the observed entries in the loss vectors. %\zhao{What does $\circ$ denotes? Is it the hardmard product? But $y_{t,i,j}$ seems not a matrix?}

\subsection{Optimization}  \label{sec:cbp}

It remains to solve the constrained convex optimization problem \eqref{eq:main}. To avoid computing the expensive projection onto the feasible set, we consider the Frank-Wolfe (FW) \cite{frank1956algorithm} algorithm (a.k.a., conditional gradient method) due to its projection-free property. Specifically, the Frank-Wolfe algorithm only needs to compute the solution of a linear optimization over the feasible set in each iteration. We present the optimization algorithm for Problem \eqref{eq:main} in Algorithm \ref{alg:regLeader}. 

Notice that Algorithm \ref{alg:regLeader} requires to solve a linear optimization over $\Conv(\cX)$. This could be viewed as finding a maximal matching in a bipartite graph where edge $e_{i,j}$ has weight $r^{(k)}_{i,j}$ since $\Conv(\cX)$ is the convex hull of truncated permutation matrices and it could be solved by the Hopcroft–Karp algorithm \cite{hopcroft1973n} in $O(n^{2.5})$  time. Thus the total computational cost of Algorithm \ref{alg:regLeader} is $O(n^{2.5}K)$ time where $K$ is the maximal iteration number of Frank-Wolfe algorithm.
% In each iteration of Frank-Wolfe algorithm in our context, it computes an instance of the maximal matching in a bipartite graph166
% in $O(n^{2.5})$ time using the Hopcroft–Karp algorithm \cite{hopcroft1973n}. The computation of $x_t$ ends in $O(n^{2.5}K)$
% time where $K$ is the maximal iteration number of Frank-Wolfe algorithm. 
% It is shown that the primal convergence rate of Frank-Wolfe algorithm satisfies $f(x^{(k)})-f(x^\ast)\leq O(\frac{1}{k})$ where $x^\ast$ is the optimal solution of $f(x)=\langle x, \hL_{t-1}\rangle+\eta_t^{-1}\Psi(x)$ and $x^{(k)}$ is the solution in the $k$-th iteration of Frank-Wolfe algorithm \cite{dunn1978conditional,jaggi2013revisiting}. 
% \chen{the complexity of each iteration of FW?} 

%Please refer to Appendix \ref{app:opt} for the details of the algorithms for the two core optimization problems and the complete discussions.

%It is shown that the primal convergence rate of Frank-Wolfe algorithm satisfies $f(x^{(k)})-f(x^\ast)\leq O(\frac{1}{k})$ where $x^\ast$ is the optimal solution of $f(x)=\langle x, \hL_{t-1}\rangle+\eta_t^{-1}\Psi(x)$ and $x^{(k)}$ is the solution in the $k$-th iteration of Frank-Wolfe algorithm \cite{dunn1978conditional,jaggi2013revisiting}.

\begin{algorithm}[t]
\caption{Frank-Wolfe Algorithm for Problem \eqref{eq:main}}
\label{alg:regLeader}
\textbf{Input}: $x_{t-1}$, $\Conv(\cX)$, $\eta_t$, $\Psi(\cdot)$, $\hL_{t-1}$, $K$.

\begin{algorithmic}[1]
\STATE Let $x^{(1)}=x_{t-1}$, $f(\cdot)=\langle \cdot,\hL_{t-1}\rangle+\eta_t^{-1}\Psi(\cdot)$
\FOR{$k=1,\dots,K$}
    \STATE Compute $r^{(k)}:=\nabla f(x^{(k)})$.
    \STATE Compute $s^{(k)}:=\argmin_{s\in \Conv(\cX)}\langle s,r^{(k)} \rangle$. 
    \STATE Let $\gamma:=\frac{2}{1+k}$.
    \STATE Update $x^{(k+1)}:=(1-\gamma)x^{(k)}+\gamma s^{(k)}$.
\ENDFOR
\RETURN $x^{(K)}$.
\end{algorithmic}
\end{algorithm}

\section{Regret Analysis}
\label{sec:regret}

This section provides regret upper bounds of our algorithm $\ftrlpbm$ for both stochastic and adversarial environments, together with an improved lower bound for PBM, which also matches our upper bound. 
We also discuss the relationship between our results and previous works.

\subsection{Upper Bounds}

We give the regret upper bounds of $\ftrlpbm$ for each of the adversarial and stochastic environments and provide proof sketches.

\begin{theorem} \label{thm:adv}
For the adversarial environment, the regret of our $\ftrlpbm$ algorithm satisfies 
\begin{align*}
    R(T)&\leq 3m + 2m\log T + \sum_{t=1}^T\left( \frac{3}{\sqrt{t}}\sum_{j=1}^m\sum_{i\neq I^*_j} \sqrt{\BE[\xtij]}\right)\\
    &= O(m\sqrt{nT})\,.
\end{align*}
\end{theorem}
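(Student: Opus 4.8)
The plan is to run the standard follow-the-regularized-leader analysis, specialised to the $1/2$-Tsallis entropy $\Psi$ and to the sub-permutation structure of $x^\ast$. First I would pass from the true loss to its estimator: since $\EE{t}{\hl_t}=\ell_t$ and $x_t$ is $\cH_{t-1}$-measurable, $R(T)=\EE{}{\sum_t\inner{x_t-x^\ast}{\hl_t}}$, where the Tsallis regularizer keeps every $\xtij$ strictly positive so the importance weights are well defined. I would then invoke the usual FTRL bound for non-increasing step sizes, splitting the cumulative regret into a penalty term $\sum_t\big(\tfrac1{\eta_{t+1}}-\tfrac1{\eta_t}\big)\big(\Psi(x^\ast)-\Psi(x_{t+1})\big)$ and a stability term $\sum_t\big[\inner{x_t-x_{t+1}}{\hl_t}-\tfrac1{\eta_t}D_{\Psi}(x_{t+1},x_t)\big]$, where $D_{\Psi}$ is the Bregman divergence of $\Psi$.

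For the stability term, the Hessian of $\Psi(x)=-\sum_{i,j}\sqrt{x_{i,j}}$ is diagonal with entries $\tfrac14 x_{i,j}^{-3/2}$, so a second-order expansion of the FTRL objective bounds each round by $2\eta_t\sum_{i,j}\xtij^{3/2}\hl_{t,i,j}^2$; taking $\EE{t}{\cdot}$ and using $\EE{t}{\hl_{t,i,j}^2}\le 1/\xtij$ turns this into $2\eta_t\sum_{i,j}\sqrt{\xtij}$, i.e. $\tfrac1{\sqrt t}\sum_{i,j}\sqrt{\xtij}$. For the penalty I would use that $x^\ast$ is a sub-permutation matrix, so $\Psi(x^\ast)=-m$ and the diagonal entries $x^\ast_{I^*_j,j}=1$ contribute $\sqrt{x_{t+1,I^*_j,j}}-1\le 0$ (which may be discarded), while the off-diagonal entries contribute $\sum_{i\neq I^*_j}\sqrt{x_{t+1,i,j}}$ weighted by $\tfrac1{\eta_{t+1}}-\tfrac1{\eta_t}\le\tfrac1{\sqrt t}$. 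After relating consecutive iterates and applying Jensen's inequality $\EE{}{\sqrt{\xtij}}\le\sqrt{\BE[\xtij]}$, the off-diagonal pieces of the penalty and of the stability combine into the data-dependent sum $\sum_t\tfrac{3}{\sqrt t}\sum_j\sum_{i\neq I^*_j}\sqrt{\BE[\xtij]}$.

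The delicate point, and the step I expect to be the main obstacle, is controlling the diagonal ($i=I^*_j$) contribution of the stability term: the naive bound $2\eta_t\sqrt{x_{t,I^*_j,j}}\approx\tfrac1{\sqrt t}$ only sums to $O(m\sqrt T)$, too weak to give the advertised $O(m\log T)$ that the later stochastic analysis will require. I would remove it by exploiting that FTRL over $\Conv(\cX)$ is invariant under shifting each column of the loss by a constant, because every $x\in\Conv(\cX)$ satisfies $\sum_i x_{i,j}=1$. Concretely, re-analysing the stability term with the shifted estimator $\tilde{\ell}_{t,i,j}=\hl_{t,i,j}-\hl_{t,I^*_j,j}$ leaves both $x_t,x_{t+1}$ and the inner product $\inner{x_t-x_{t+1}}{\cdot}$ unchanged (differences sum to zero in each column) but zeroes the diagonal entries, so only off-diagonal terms survive. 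The residual cross terms of order $\xtij^{3/2}/x_{t,I^*_j,j}$ would then be handled via $\sum_{i\neq I^*_j}\xtij=1-x_{t,I^*_j,j}$ together with the decay of the optimal coordinate's mass deficit under FTRL, which is what produces the $2m\log T$ remainder (the constant $3m$ coming from the initial rounds).

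Finally I would convert the data-dependent bound into the stated closed form. By Cauchy--Schwarz and $\sum_i\xtij=1$, we have $\sum_{i\neq I^*_j}\sqrt{\BE[\xtij]}\le\sqrt{(n-1)\sum_{i\neq I^*_j}\BE[\xtij]}\le\sqrt{n-1}$, so $\sum_t\tfrac3{\sqrt t}\sum_j\sum_{i\neq I^*_j}\sqrt{\BE[\xtij]}\le 3m\sqrt{n-1}\sum_t\tfrac1{\sqrt t}\le 6m\sqrt{(n-1)T}$; together with the lower-order term $3m+2m\log T$ this yields $R(T)=O(m\sqrt{nT})$.
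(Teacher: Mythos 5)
Your overall plan (FTRL split into stability and penalty, a second-order bound driven by the Tsallis Hessian, Jensen, then Cauchy--Schwarz) is the same skeleton the paper uses, and your penalty treatment and final conversion to $O(m\sqrt{nT})$ are fine. The genuine gap is exactly at the step you yourself flag as delicate: removing the diagonal entries $i=I_j^*$ from the stability term. Your proposed shift $\tilde{\ell}_{t,i,j}=\hl_{t,i,j}-\hl_{t,I_j^*,j}$ indeed leaves the FTRL iterates unchanged, but it breaks the analysis. The shift is by an \emph{estimated} loss, of magnitude up to $1/x_{t,I_j^*,j}$, so for $i\neq I_j^*$ the shifted entries can be as negative as $-1/x_{t,I_j^*,j}$; this violates the smallness condition (of the type $\eta_t(\hl_{t,i,j}-w)\,\xtij^{1/2}\geq -\tfrac14$) that any second-order/Bregman bound for the Tsallis potential requires. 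Moreover the residual terms of order $\xtij^{3/2}/x_{t,I_j^*,j}$ that you acknowledge are not controllable: in an adversarial environment, $x_{t,I_j^*,j}$ is the mass FTRL places on the action that is best only \emph{in hindsight}, and it can stay arbitrarily close to zero for most of the horizon. The ``decay of the optimal coordinate's mass deficit under FTRL'' you invoke is simply not a property that holds adversarially, so this step fails and with it the claimed $3m+2m\log T$ remainder.

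The paper's resolution is different in a crucial way: it shifts each column by the \emph{realized} loss of the \emph{played} item, $w_j=\ell_{t,I_{t,j},j}\in[0,1]$, which is bounded (hence the validity condition holds for all $t\ge 4$, the first three rounds costing the $3m$). With this shift the played entry contributes $x_{t,I_{t,j},j}^{-1/2}(1-x_{t,I_{t,j},j})^2$ and unplayed entries contribute $\xtij^{3/2}$, and averaging over the sampling yields the stability bound $2\eta_t\sum_{i,j}\BE\left[\sqrt{\xtij}\,(1-\xtij)\right]+8\eta_t^2 m$. The factor $(1-\xtij)$ is what permits excluding the diagonal: for $i=I_j^*$ one uses $\sqrt{\xtij}(1-\xtij)\le 1-\xtij=\sum_{i\neq I_j^*}\xtij$, folding the optimal entry into off-diagonal masses, and the surplus $+\BE[\xtij]$ terms are then cancelled by the $-\BE[\xtij]$ deliberately retained in the penalty bound $2\sqrt{\BE[\xtij]}-\BE[\xtij]$, giving the coefficient $3$. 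Finally, the $2m\log T$ has a different origin than you suggest: it is $\sum_t 8\eta_t^2 m=\sum_t 2m/t$, the cubic remainder of the expansion, not anything about the trajectory of the optimal coordinate.
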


%This regret bound improves a factor of $O(\log(n))$ over \cite{kale2010non}. 
Though this regret bound matches that of OSMD with $0$-potential \cite{bubeck2012regret} and BOBW semi-bandits \cite{zimmert2019beating}, these two methods have some shortcomings compared to ours. OSMD with $0$-potential  needs to know the time horizon. Existing doubling trick methods lead to additional logarithmic factors in either stochastic or adversarial setting \cite{besson2018doubling}. BOBW semi-bandits could be inefficient under PBM since they use a hybrid regularizer.

\begin{proof}[Proof sketch]
Denote $\Psi_t(\cdot) = \frac{1}{\eta_t} \Psi(\cdot)$. Let
$\Phi_t(\cdot) = \max_{x\in\Conv(\cX)}\ \langle x, \cdot\rangle - \Psi_t(x)$  be the Fenchel conjugate of $\Psi_t+\cI_{\Conv(\cX)}$. Like the standard FTRL analysis (Chapter 28 of \cite{lattimore2020bandit}), the regret can be decomposed as a sum of the stability term and the regularization penalty term
\begin{align*}
    R(T)=&\underbrace{\BE\left[\sum_{t=1}^T\langle X_t,\ell_t\rangle+\Phi_t(-\hL_t)-\Phi_t(-\hL_{t-1})\right]}_{R_{stab}}\\
    &+\underbrace{\BE\left[\sum_{t=1}^T-\Phi_t(-\hL_t)+\Phi_t(-\hL_{t-1})-\langle x^*,\ell_t\rangle\right]}_{R_{pen}}\,.
\end{align*}
Then we bound these two terms separately (Lemma \ref{lem:stability} and Lemma \ref{lem:penalty} in the Appendix \ref{app:upperBound}) 
% \begin{align}
%     &R_{stab}\leq 3m + 2m\log T + \sum_{t=4}^T\left[ \frac{1}{\sqrt{t}}\sum_{j=1}^m\sum_{i\neq I^*_j} \left(\sqrt{\BE[\xtij]}+\BE[\xtij]\right)\right]\,, \label{eq:stab0}\\
%     &R_{pen} \leq  \sum_{t=1}^T\sum_{j=1}^m \sum_{i\neq I^*_j}\frac{1}{\sqrt{t}}\left(2\sqrt{\BE[\xtij]}-\BE[\xtij]\right)\,. \label{eq:pen0}
% \end{align}
\begin{align}
    R_{stab}\leq& 3m + 2m\log T\notag\\ &+ \sum_{t=4}^T\left[ \frac{1}{\sqrt{t}}\sum_{j=1}^m\sum_{i\neq I^*_j} \left(\sqrt{\BE[\xtij]}+\BE[\xtij]\right)\right]\,, \label{eq:stab0}\\
    R_{pen} \leq&  \sum_{t=1}^T\sum_{j=1}^m \sum_{i\neq I^*_j}\frac{1}{\sqrt{t}}\left(2\sqrt{\BE[\xtij]}-\BE[\xtij]\right)\,. \label{eq:pen0}
\end{align}
Summing these two inequalities leads to the resulting regret upper bound. The second bound comes since $\sum_{i=1}^n\sqrt{\BE[\xtij]}\leq \sqrt{n}$.
\end{proof}

For the stochastic environment, it is key to prove a \textit{self-bounding} constraint like previous works \cite{zimmert2019beating,zimmert2019optimal,wei2018more}. 
The common suboptimality gap of putting item $i$ at position $j$ is defined as $\Delta_{i,j} = \beta_j(\alpha_j-\alpha_i)$, the reward difference from the right item \cite{komiyama2017position}.
%The common suboptimality gap \zhao{, which is the gap defined with respect to the best arm for each position}, however, might be negative. For example, the reward gap of putting item $i$ at position $j$ is adopted as $\Delta_{i,j} = \beta_j(\alpha_j-\alpha_i)$ in \cite{komiyama2017position}, the reward difference from the right item. 
This could be negative for $i<j$. When this happens, a better item is put at position $j$. Then there must be some \textit{bad} item placed before position $j$.  We account for this situation and introduce a new suboptimality gap definition that is more suitable to PBM. 
\begin{definition}
\label{def:Delta}
For any $i\in [n]$ and $j\in [m]$, define
\begin{align*}
    \Delta_{i,j}=
    \begin{cases}
      (\beta_j-\beta_{j+1})(\alpha_j-\alpha_i)\quad &j<i\,, \\
      0  &j= i\, , \\
      (\beta_{j-1}-\beta_j)(\alpha_i-\alpha_j)\quad &j> i\,.
    \end{cases}
\end{align*}
\end{definition}
The key idea for this definition comes from the incurred minimal regret of misplacing items. For the case that item $i$ is put at position $j$ with $i<j$. The item $i$ is misplaced but is better than the right item at position $j$, which should be item $j$. This means there must be some \textit{bad} items misplaced at earlier positions. The optimistic case is that item $j$ is just put at one position ahead $j-1$. Switching item $i$ and item $j$ would give the minimal regret gap, i.e. $\Delta_{i,j}$ is defined as the difference between the reward of $(\cdots, i,j,\cdots)$ and $(\cdots, j,i,\cdots)$ where the only effectively involved positions are $j-1,j$. The case of $i>j$ is similar.

Now we can present the \textit{self-bounding} constraint for PBM based on this introduced suboptimality gap.

\begin{lemma} \label{lem:self}
For the stochastic environment, the regret satisfies
\begin{align*}
    R(T)\geq \frac{1}{2}\sum_{t=1}^T\sum_{i=1}^n\sum_{j=1}^m \Delta_{i,j}\BE[x_{t,i,j}]\,.
\end{align*}
\end{lemma}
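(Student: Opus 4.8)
The plan is to reduce the statement to a single per-round inequality and then exploit the column structure of the subpermutation polytope through a summation-by-parts in the position index, reserving the termwise comparison between the old and new gaps as the technical heart.

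First I would pass to expectations round by round. Since $x^*_{i,j}=\delta_{i,j}$ and, for $x_t\in\Conv(\cX)$, every column sums to one ($\sum_i x_{t,i,j}=1$ for $j\in[m]$), the constant parts of the Bernoulli means $1-\alpha_i\beta_j$ cancel against $\inner{x^*}{\cdot}$, and with $p_{i,j}:=\BE[x_{t,i,j}]$ the expected per-round regret collapses to
\[
\rho_t:=\BE[\inner{x_t-x^*}{\ell_t}]=\sum_{i=1}^n\sum_{j=1}^m p_{i,j}\,\beta_j(\alpha_j-\alpha_i).
\]
As $R(T)=\sum_t\rho_t$, it suffices to prove the per-round bound $\rho_t\ge\tfrac12\sum_{i,j}\Delta_{i,j}p_{i,j}$, where the $p_{i,j}$ inherit the subpermutation marginals $\sum_i p_{i,j}=1$ ($j\le m$) and $\sum_j p_{i,j}\le1$ ($i\in[n]$).

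Next I would rewrite both sides through the telescoping $\beta_j=\sum_{k\ge j}(\beta_k-\beta_{k+1})$ (using $\beta_{m+1}=0$). Writing $b_k:=\beta_k-\beta_{k+1}>0$, $\bar\alpha_j:=\sum_i p_{i,j}\alpha_i$, and $\pi_i^{(k)}:=\sum_{j\le k}p_{i,j}$, summation by parts gives $\rho_t=\sum_{k=1}^m b_k A_k$ with $A_k:=\sum_{j=1}^k(\alpha_j-\bar\alpha_j)$, and each $A_k\ge0$ because $\pi_i^{(k)}\in[0,1]$ with $\sum_i\pi_i^{(k)}=k$, so $\sum_i\pi_i^{(k)}\alpha_i$ is a fractional top-$k$ selection bounded by $\sum_{j\le k}\alpha_j$. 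Collecting the defining cases of $\Delta_{i,j}$ by the factor $b_k$ they carry (the case $j<i$ contributes $b_j(\alpha_j-\alpha_i)$, the case $j>i$ contributes $b_{j-1}(\alpha_i-\alpha_j)$) turns the target into $\sum_{k=1}^m b_k B_k$ with
\[
B_k=\underbrace{\sum_{i>k}p_{i,k}(\alpha_k-\alpha_i)}_{B_k^{(1)}}+\underbrace{\sum_{i\le k}p_{i,k+1}(\alpha_i-\alpha_{k+1})}_{B_k^{(2)}},
\]
both summands nonnegative because $\alpha$ is sorted (and $B_m^{(2)}=0$).

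Since $b_k>0$, it then suffices to establish the termwise comparison $B_k\le 2A_k$, which I would get from $B_k^{(1)}\le A_k$ and $B_k^{(2)}\le A_k$ separately; this is the crux and the only place the subpermutation constraints are essential. For $B_k^{(1)}$ I would use the representation $A_k=\sum_{j\le k}(1-\pi_j^{(k)})\alpha_j-\sum_{i>k}\pi_i^{(k)}\alpha_i$ together with $\alpha_j\ge\alpha_k$ for $j\le k$, $\alpha_i\le\alpha_{k+1}$ for $i>k$, and $\pi_i^{(k)}\ge p_{i,k}$; for $B_k^{(2)}$ the key is the row constraint $p_{i,k+1}\le 1-\pi_i^{(k)}$, which dominates $B_k^{(2)}$ by $\sum_{i\le k}(1-\pi_i^{(k)})(\alpha_i-\alpha_{k+1})$ and then compares with $A_k$ by the same monotonicity. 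I expect the bookkeeping in these two monotonicity-plus-marginal estimates—matching the ``displaced mass'' $\sum_{i>k}\pi_i^{(k)}$ against the ``vacated mass'' $\sum_{j\le k}(1-\pi_j^{(k)})$, which are equal—to be the main obstacle, whereas the reduction and the summation-by-parts are routine. Combining, $\rho_t=\sum_k b_kA_k\ge\tfrac12\sum_k b_kB_k=\tfrac12\sum_{i,j}\Delta_{i,j}p_{i,j}$, and summing over $t$ yields the lemma.
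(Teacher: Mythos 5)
Your proposal is correct, but it takes a genuinely different route from the paper's. The paper proves a purely combinatorial, pointwise inequality (its Lemma \ref{lem:self1}): for any sequence $i_1,\dots,i_m$ of distinct items, $\sum_{j}\beta_j(\alpha_j-\alpha_{i_j})\ge \tfrac12\sum_j \Delta_{i_j,j}$, by induction on $m$ with a case analysis on where item $1$ sits (swapping it into position $1$ and comparing $\Delta_{i_1,1}+\Delta_{1,k}-\Delta_{i_1,k}$ in the three cases $k=i_1$, $k<i_1$, $k>i_1$); Lemma \ref{lem:self} then follows by applying this to the realized action $I_t$ and taking expectations. You instead prove the equivalent polytope-level inequality directly for the marginals $p=\BE[x_t]$: since both sides are linear in $p$, your statement over $\Conv(\cX)$ and the paper's vertex statement are equivalent in strength, and your derivation checks out in every step --- the telescoping $\beta_j=\sum_{k\ge j}b_k$ turns the per-round regret into $\sum_k b_k A_k$, the gap side regroups exactly into $\sum_k b_k\bigl(B_k^{(1)}+B_k^{(2)}\bigr)$, the comparison $B_k^{(1)}\le A_k$ follows from $A_k\ge\sum_{i>k}\pi_i^{(k)}(\alpha_k-\alpha_i)$ together with $\pi_i^{(k)}\ge p_{i,k}$, and $B_k^{(2)}\le A_k$ follows from the row constraint $p_{i,k+1}\le 1-\pi_i^{(k)}$ together with $\sum_{i>k}\pi_i^{(k)}\alpha_i\le\alpha_{k+1}\sum_{j\le k}\bigl(1-\pi_j^{(k)}\bigr)$, using that the displaced mass $\sum_{i>k}\pi_i^{(k)}$ equals the vacated mass $\sum_{j\le k}(1-\pi_j^{(k)})$. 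What your approach buys: it avoids induction and case analysis entirely, it makes transparent where the factor $\tfrac12$ comes from (splitting $2A_k$ against the two one-sided misplacement terms), and it isolates exactly which structural facts are needed (column sums one, row sums at most one, sortedness of $\alpha$, positivity of $b_k$), so it would generalize more readily to other marginal polytopes. What the paper's approach buys: it is an elementary statement about permutations requiring no description of $\Conv(\cX)$, and its swap-based induction mirrors the intuition behind Definition \ref{def:Delta} (misplacing a good item forces a bad item earlier, and the minimal cost is an adjacent transposition), which is arguably more instructive about why these modified gaps are the right ones.
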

%\zhao{
% We first present Lemma \ref{lem:self1} and the proof of Lemma \ref{lem:self} follows immediately from Lemma \ref{lem:self1}. Due to space limits, the proof of Lemma \ref{lem:self1} is postponed to Appendix \ref{app:lem:self}.
We first present Lemma \ref{lem:self1} which reveals the property of the introduced suboptimality gap. The proof of Lemma \ref{lem:self1} is postponed to Appendix \ref{app:lem:self}. %\ref{lem:self1} .
\begin{lemma} \label{lem:self1}
Let $i_1,i_2,\dots,i_m$ be any sequence chosen from $[n]$ without repetition. Then
\begin{align*}
    \sum_{j=1}^m(\alpha_j \beta_j-\alpha_{i_j}\beta_j)\ge \frac{1}{2} \sum_{j=1}^m\Delta_{i_j,j}\,.
\end{align*}
\end{lemma}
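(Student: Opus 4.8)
The plan is to prove a stronger, level-by-level statement obtained after an Abel summation on both sides. Write $\gamma_k := \beta_k - \beta_{k+1} > 0$ for $k \in [m]$ (recall $\beta_{m+1}=0$), and introduce the \emph{prefix deficit} $D_k := \sum_{j=1}^k (\alpha_j - \alpha_{i_j})$. Since $\beta_j = \sum_{k=j}^m \gamma_k$, the left-hand side telescopes into
\begin{align*}
\sum_{j=1}^m \beta_j(\alpha_j - \alpha_{i_j}) = \sum_{k=1}^m \gamma_k D_k.
\end{align*}
The one elementary fact I will lean on is that $D_k \ge 0$ for every $k$: because $\alpha_1 \ge \alpha_2 \ge \cdots \ge \alpha_n$, the value $\sum_{j=1}^k \alpha_j$ is the largest possible sum of $k$ distinct attractivenesses, so it dominates $\sum_{j=1}^k \alpha_{i_j}$ for the distinct indices $i_1,\dots,i_k$.

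Next I would regroup the right-hand side along the same levels $\gamma_k$. By Definition~\ref{def:Delta}, a position $j$ with $i_j > j$ contributes $\gamma_j(\alpha_j-\alpha_{i_j})$, a position $j$ with $i_j < j$ contributes $\gamma_{j-1}(\alpha_{i_j}-\alpha_j)$, and positions with $i_j=j$ contribute nothing. Collecting the coefficient of each $\gamma_k$ yields
\begin{align*}
\sum_{j=1}^m \Delta_{i_j,j} = \sum_{k=1}^m \gamma_k E_k, \quad E_k := \bOne{i_k > k}(\alpha_k - \alpha_{i_k}) + \bOne{i_{k+1} < k+1}(\alpha_{i_{k+1}} - \alpha_{k+1}),
\end{align*}
where the second term is dropped at $k=m$. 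Since every $\gamma_k>0$, it suffices to establish the pointwise bound $2D_k \ge E_k$; summing it against the weights $\gamma_k$ then gives the lemma directly.

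Finally, $2D_k \ge E_k$ reduces to two applications of $D_\ell \ge 0$. For the first term, $D_k - (\alpha_k - \alpha_{i_k}) = D_{k-1} \ge 0$, so $\alpha_k - \alpha_{i_k} \le D_k$ (and the term is absent when $i_k \le k$). For the second term, $D_k - (\alpha_{i_{k+1}} - \alpha_{k+1}) = D_{k+1} \ge 0$, so $\alpha_{i_{k+1}} - \alpha_{k+1} \le D_k$ (and it is absent when $i_{k+1} \ge k+1$). Adding the two bounds gives $E_k \le 2D_k$. I expect the main conceptual obstacle to be precisely the regrouping step: one must recognize that each level $\gamma_k$ is charged by at most two positions---an under-qualified item sitting at position $k$ and an over-qualified item sitting at position $k+1$---and that each charge is absorbed by the \emph{single} prefix deficit $D_k$. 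This double-charging is exactly the origin of the factor $\frac{1}{2}$, and the equality case ($m=2$ with $i_1=2,\,i_2=1$) confirms the constant is tight.
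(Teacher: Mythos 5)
Your proof is correct, and it takes a genuinely different route from the paper's. The paper proceeds by induction on $m$: it locates item $1$ in the sequence, swaps it into position $1$ (or inserts it there if it is absent), applies the induction hypothesis to positions $2,\dots,m$, and then verifies a three-case inequality $(\beta_1-\beta_k)(\alpha_1-\alpha_{i_1}) \ge \frac{1}{2}(\Delta_{i_1,1}+\Delta_{1,k}-\Delta_{i_1,k})$ to absorb the correction terms created by the swap. You avoid induction entirely: summation by parts turns the left-hand side into $\sum_{k=1}^m \gamma_k D_k$ with nonnegative prefix deficits $D_k$ (nonnegativity holding because the $i_j$ are distinct and $\alpha$ is non-increasing), the right-hand side regroups exactly as $\sum_{k=1}^m \gamma_k E_k$ where each level $\gamma_k$ is charged by at most two misplaced items (one at position $k$ with $i_k>k$, one at position $k+1$ with $i_{k+1}<k+1$), and each charge is separately bounded by $D_k$ via the identities $D_k-(\alpha_k-\alpha_{i_k})=D_{k-1}\ge 0$ and $D_k-(\alpha_{i_{k+1}}-\alpha_{k+1})=D_{k+1}\ge 0$. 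All of these steps check out, including the boundary conventions (no second charge at $k=m$, no first-type charge possible at $j=1$). What your approach buys is a structural explanation of the constant $\frac{1}{2}$ (each level is double-charged, and each charge is absorbed by the single deficit $D_k$), a pointwise strengthening $E_k\le 2D_k$ rather than only the summed inequality, and a shorter argument whose only input is the rearrangement fact $\sum_{j\le k}\alpha_{i_j}\le\sum_{j\le k}\alpha_j$; the paper's induction is more mechanical to verify but requires the ad hoc swap construction and case analysis. Your tightness example ($m=2$, $i_1=2$, $i_2=1$) is also verified correctly and shows the constant cannot be improved.
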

%  \ref{lem:self1}.
Then Lemma \ref{lem:self} follows immediately from Lemma \ref{lem:self1} by summing over the time horizon.
\begin{proof}[Proof of Lemma \ref{lem:self}]
Lemma \ref{lem:self1} implies that
\begin{align*}
    R(T)&=\sum_{t=1}^T\sum_{j=1}^m\BE[\beta_j\alpha_j-\beta_j\alpha_{I_{t,j}}]\geq\sum_{t=1}^T\sum_{j=1}^m\BE[\frac{1}{2}\Delta_{I_{t,j},j}]\\
    &=\frac{1}{2}\sum_{t=1}^T\sum_{i=1}^n\sum_{j=1}^m \Delta_{i,j}\BE[x_{t,i,j}]\,,
\end{align*}
which completes the proof.
\end{proof}
% 

% Due to space limits, its proof is postponed to Appendix \ref{app:lem:self}. 
With the self-bounding constraint in Lemma \ref{lem:self}, we can obtain the following regret bound for the stochastic setting.

\begin{theorem} \label{thm:sto}
For the stochastic environment, the regret of $\ftrlpbm$ algorithm is upper bounded by
\begin{align*}
    R(T) &\le \Bigg(18\sum_{j=1}^m \sum_{\substack{i=1\\ i \ne j}}^n \frac{1}{\Delta_{i,j}} + 4m\Bigg)\log T + 6m\\
    &= O\Big(\frac{mn}{\Delta_\beta \Delta} \log(T)\Big) \,.
    % = O\Big(\frac{m\log(n)}{\Delta_\beta \Delta} \log(T)\Big) \,.
    % \le O\left(\sum_{i=1}^n\sum_{j=1}^{\min\{i-1,m\}}\frac{\log T}{\Delta_{i,j}}\right)\,.
\end{align*}
\end{theorem}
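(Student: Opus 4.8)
The plan is to derive the stochastic bound by combining the adversarial upper bound of Theorem~\ref{thm:adv} with the self-bounding lower bound of Lemma~\ref{lem:self}, via the standard best-of-both-worlds interpolation. In the stochastic environment the optimal action satisfies $x^*_{i,j}=\delta_{i,j}$, so $I_j^*=j$ and the index condition $i\ne I_j^*$ in Theorem~\ref{thm:adv} coincides with $i\ne j$. Moreover, Definition~\ref{def:Delta} gives $\Delta_{i,j}>0$ for every $i\ne j$ with $j\in[m]$: either $\beta_j-\beta_{j+1}\ge\Delta_\beta>0$ (when $j<i$) or $\beta_{j-1}-\beta_j\ge\Delta_\beta>0$ (when $j>i$), and the corresponding attractiveness gap is at least $\Delta>0$. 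This strict positivity of the gap on exactly the summation index set is what allows me to trade the square-root terms appearing in the adversarial bound against the linear gap terms in the self-bounding constraint.

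First I would abbreviate $q_{t,i,j}:=\BE[\xtij]$ and start from Theorem~\ref{thm:adv}, namely $R(T)\le 3m+2m\log T+\sum_{t=1}^T\frac{3}{\sqrt t}\sum_{j=1}^m\sum_{i\ne j}\sqrt{q_{t,i,j}}$. For each triple $(t,i,j)$ I apply the weighted AM-GM (Young) inequality $\frac{3}{\sqrt t}\sqrt{q_{t,i,j}}\le \frac{\lambda}{2}\Delta_{i,j}q_{t,i,j}+\frac{9}{2\lambda\Delta_{i,j}t}$, which is valid for any $\lambda>0$ precisely because $\Delta_{i,j}>0$. Summing over $t,i,j$ and using $\sum_{t=1}^T 1/t\le 1+\log T$ bounds the right-hand side by $3m+2m\log T+\frac{\lambda}{2}\sum_{t}\sum_{i\ne j}\Delta_{i,j}q_{t,i,j}+\frac{9}{2\lambda}(1+\log T)\sum_{j=1}^m\sum_{i\ne j}\frac{1}{\Delta_{i,j}}$.

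The crucial step is to feed the self-bounding constraint back in. By Lemma~\ref{lem:self}, $\sum_{t}\sum_{i,j}\Delta_{i,j}q_{t,i,j}\le 2R(T)$ (the $i=j$ terms drop since $\Delta_{j,j}=0$), so the middle term is at most $\lambda R(T)$. Rearranging yields $(1-\lambda)R(T)\le 3m+2m\log T+\frac{9}{2\lambda}(1+\log T)\sum_{j=1}^m\sum_{i\ne j}\frac{1}{\Delta_{i,j}}$, and choosing $\lambda=\tfrac12$ gives the explicit constant form $R(T)\le 6m+4m\log T+18(1+\log T)\sum_{j=1}^m\sum_{i\ne j}\frac{1}{\Delta_{i,j}}$, which matches the first line of the theorem up to bookkeeping of the lower-order constant.

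Finally, to obtain the $O(mn/(\Delta_\beta\Delta))$ form I would bound every gap uniformly: for $j<i$, $\Delta_{i,j}=(\beta_j-\beta_{j+1})(\alpha_j-\alpha_i)\ge \Delta_\beta\Delta$, and symmetrically $\Delta_{i,j}\ge\Delta_\beta\Delta$ for $j>i$, so $\sum_{j=1}^m\sum_{i\ne j}\frac{1}{\Delta_{i,j}}\le \frac{mn}{\Delta_\beta\Delta}$. The hard part here is not any single computation but getting the interpolation right: I must ensure the index set $\{(i,j):j\in[m],\,i\ne j\}$ over which the adversarial bound sums is exactly the set on which $\Delta_{i,j}>0$, so that the Young step is legitimate term-by-term, and I must choose $\lambda<1$ so that the $\lambda R(T)$ term can be absorbed on the left while keeping the coefficient $9/(2\lambda)$ controlled. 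Everything else—the harmonic sum $\sum_t 1/t$ and the uniform gap bound—is routine.
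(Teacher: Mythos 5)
Your proposal is correct and follows essentially the same route as the paper: the paper likewise restricts the adversarial bound (Eqs.~\eqref{eq:stab0}--\eqref{eq:pen0}) to the index set $i\ne I_j^*=j$, invokes Lemma~\ref{lem:self} via the step $R(T)\le 2R(T)-\frac{1}{2}\sum_{t,j,i\ne j}\Delta_{i,j}\BE[\xtij]$, and applies AM--GM termwise as $6\sqrt{\BE[\xtij]/t}-\frac{1}{2}\Delta_{i,j}\BE[\xtij]\le \frac{18}{\Delta_{i,j}t}$, which is exactly your $\lambda$-parametrized Young step plus rearrangement at $\lambda=\tfrac{1}{2}$, with identical constants. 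The only discrepancy is the harmonic-sum bookkeeping, where your $(1+\log T)$ accounting is in fact slightly more careful than the paper's $\log T$.
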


This regret bound improves a factor of $O(m)$ over that of BOBW semi-bandits \cite{zimmert2019beating} which is designed for general combinatorial cases. Our regret upper bound is $O(\frac{1}{\Delta_\beta})$ worse than \cite{lattimore2018toprank} which studies only the stochastic environments.

\begin{proof}
% \begin{small}
% \begin{align*}
%     R(T)\leq& 2R(T) - \frac{1}{2}\sum_{t=1}^T\sum_{j=1}^m \sum_{\substack{i=1\\ i \ne j}}^n \BE[x_{t,i,j}]\Delta_{i,j}\\
%     \leq& \sum_{t=1}^T\sum_{j=1}^m \sum_{\substack{i=1\\ i \ne j}}^n 6\sqrt{\frac{\BE[\xtij]}{t}}+6m+ 4m\log T\\
%     &- \frac{1}{2}\sum_{t=1}^T\sum_{j=1}^m \sum_{\substack{i=1\\ i \ne j}}^n\BE[x_{t,i,j}]\Delta_{i,j}\\
%     =&\sum_{t=1}^T\sum_{j=1}^m \sum_{\substack{i=1\\ i \ne j}}^n \left(6\sqrt{\frac{\BE[\xtij]}{t}} - \frac{1}{2} \BE[x_{t,i,j}]\Delta_{i,j} \right)\\
%     &+6m+ 4m\log T \\
%     \leq& \sum_{t=1}^T\sum_{j=1}^m \sum_{\substack{i=1\\ i \ne j}}^n \frac{18}{\Delta_{i,j}t} + 4m\log T + 6m\\
%     \leq& \Bigg(18\sum_{j=1}^m \sum_{\substack{i=1\\ i \ne j}}^n \frac{1}{\Delta_{i,j}} + 4m\Bigg)\log T + 6m\,,
% \end{align*}
% \end{small}
\begin{align*}
    R(T)
    \leq& 2R(T) - \frac{1}{2}\sum_{t=1}^T\sum_{j=1}^m \sum_{\substack{i=1\\ i \ne j}}^n \BE[x_{t,i,j}]\Delta_{i,j}\\
    %\leq& \sum_{t=1}^T\sum_{j=1}^m \sum_{\substack{i=1\\ i \ne j}}^n 6\sqrt{\frac{\BE[\xtij]}{t}}+6m+ 4m\log T\\
    %&- \frac{1}{2}\sum_{t=1}^T\sum_{j=1}^m \sum_{\substack{i=1\\ i \ne j}}^n\BE[x_{t,i,j}]\Delta_{i,j}\\
    \le &\sum_{t=1}^T\sum_{j=1}^m \sum_{\substack{i=1\\ i \ne j}}^n \left(6\sqrt{\frac{\BE[\xtij]}{t}} - \frac{1}{2} \BE[x_{t,i,j}]\Delta_{i,j} \right)\\
    &+6m+ 4m\log T \\
    \leq& \sum_{t=1}^T\sum_{j=1}^m \sum_{\substack{i=1\\ i \ne j}}^n \frac{18}{\Delta_{i,j}t} + 4m\log T + 6m\\
    \leq& \Bigg(18\sum_{j=1}^m \sum_{\substack{i=1\\ i \ne j}}^n \frac{1}{\Delta_{i,j}} + 4m\Bigg)\log T + 6m\,,
\end{align*}
where the first inequality is by Lemma \ref{lem:self}, the second inequality is by Eq.\eqref{eq:stab0} and Eq.\eqref{eq:pen0}, and the third inequality is due to the AM–GM inequality.
% \chen{
% Recall $\{\alpha_i\}_{i=1}^n$ and $\{\beta_j\}_{j=1}^m$ are both in descending order. Note for $i<j$, $\alpha_i - \alpha_j = \sum_{k=i}^{\min\{{j-1},m\}} (\alpha_k - \alpha_{k+1})$. Thus we have %$\Delta_{i,j} \ge \abs{i-j}\Delta_\beta \Delta$ and
% \begin{align*}
%     \sum_{j=1}^m \sum_{\substack{i=1\\ i \ne j}}^n \frac{1}{\Delta_{i,j}} 
%     &= \sum_{j=1}^m \sum_{\substack{i=1\\ i \ne j}}^m \frac{1}{\Delta_{i,j}} + \sum_{j=1}^m \sum_{i=m+1}^n \frac{1}{\Delta_{i,j}} 
%     \le \sum_{j=1}^m \sum_{\substack{i=1\\ i \ne j}}^m \frac{1}{\Delta_\beta|\alpha_i-\alpha_j|} + \sum_{j=1}^m  \frac{n-m+1}{\Delta_\beta(\alpha_j-\alpha_{m+1})} \\
%     &\le \sum_{j=1}^m \sum_{\substack{i=1\\ i \ne j}}^m \frac{1}{|i-j|\Delta_\beta\Delta} + \sum_{j=1}^m  \frac{n-m+1}{(m+1-j)\Delta_\beta\Delta} \\
%     &= \frac{1}{\Delta_\beta\Delta}\left(\sum_{j=1}^m\sum_{i=j+1}^m\frac{2}{i-j} +\sum_{j=1}^m  \frac{n-m+1}{j} \right)\\
%     &\le \frac{1}{\Delta_\beta\Delta}\left( m\sum_{i=1}^{m}\frac{2}{i}+(n-m+1)\sum_{i=1}^{m}\frac{1}{i} \right)\\
%     &\le \frac{n+m+1}{\Delta_\beta\Delta}(1+\log(m)) = O\Big(\frac{n\log(m)}{\Delta_\beta \Delta}\Big)\,.
% \end{align*}
% }
\end{proof}

% Since 
% $\sum_{i\neq j}\frac{1}{\Delta_{i,j}}\leq \sum_{i\neq j}\frac{1}{\delta |\alpha_i-\alpha_j|}$, the regret upper bound for $\ftrlpbm$ in stochastic case could also be written as $\cO\left(\sum_{i=1}^n\sum_{j=1}^{\min\{m,i-1\}}\frac{\log(T)}{\delta(\alpha_j-\alpha_i)}\right)$.

\subsection{Lower Bound}

We provide an improved lower bound for PBM and defer its proof to Appendix \ref{app:lower}.
\begin{theorem}\label{thm:lower}
Suppose that $n\geq \max\{m+3, 2m\}$ and $T\geq n$.
For any algorithm there exists an instance of OLTR under PBM such that
\begin{align*}
    R(T)\geq \frac{1}{16}m\sqrt{(n-m+1)T}\,.
\end{align*}
\end{theorem}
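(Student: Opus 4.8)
The plan is to prove the lower bound by exhibiting a randomized family of adversarial instances and invoking Yao's minimax principle, so that a lower bound on the \emph{expected} regret of any deterministic algorithm against a random instance yields the claimed worst-case bound. The conceptual heart of the argument is that, because the $m$ positions are \emph{distinguishable} in PBM, I can make each position its own independent $(n-m+1)$-armed bandit problem; this is exactly what separates PBM from the document-based model of \citet{lattimore2018toprank}, where all positions are interchangeable and the $m$ observations collected each round all inform a single shared bandit (yielding only $\Omega(\sqrt{mnT})$). Since the adversarial PBM model permits the environment to choose an arbitrary loss matrix $\ell_t\in\{0,1\}^{n\times m}$ (equivalently the sets $S_{t,j}$), I am free to let the loss in column $j$ depend only on a hidden item $a_j$ that is independent across columns, and this is the mechanism that upgrades the bound to $\Omega(m\sqrt{(n-m+1)T})$.

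For the construction I first fix a small gap $\varepsilon$, to be optimized at the end. I draw hidden items $a_1,\dots,a_m$ \emph{without replacement} from $[n]$, and in every round $t$ I let the entries $\ell_{t,i,j}$ be mutually independent with $\ell_{t,i,j}\sim\Ber(1/2)$ whenever $i\neq a_j$ and $\ell_{t,a_j,j}\sim\Ber(1/2-\varepsilon)$. Such a matrix is realizable by independently including each item in $S_{t,j}$, so it is a bona fide instance of adversarial PBM. The optimal action $x^\ast$ places $a_j$ at position $j$ (feasible since the $a_j$ are distinct), and since $\ell_t$ is independent of $\cH_{t-1}$ while $X_t$ is $\cH_{t-1}$-measurable, a short computation gives the exact per-position regret decomposition $R(T)=\varepsilon\sum_{j=1}^m\bigl(T-\BE[N_j]\bigr)$, where $N_j=\#\{t:i_{t,j}=a_j\}$ counts how often the algorithm places the hidden-best item at position $j$. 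Drawing without replacement is what produces the factor $n-m+1$: conditioned on $a_1,\dots,a_{j-1}$, the item $a_j$ is uniform over a pool of at least $n-m+1$ candidates, so position $j$ is effectively a bandit with $n-m+1$ default-indistinguishable arms.

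The core estimate is a per-position information bound. Fixing $j$, the \emph{only} feedback carrying information about $a_j$ are the entries $\ell_{t,i_{t,j},j}$ the algorithm actually places at position $j$ (the cells in other columns are, by construction, independent of $a_j$), so position $j$ is literally an adversarial $(n-m+1)$-armed bandit receiving exactly one pull per round over $T$ rounds. Averaging over the uniform choice of $a_j$ and comparing, via the chain rule and Pinsker's (or the Bretagnolle--Huber) inequality, the law of the observations under a given $a_j$ with that under the reference instance where column $j$ has no good item, I obtain the standard bound $\frac{1}{n-m+1}\sum_{a_j}\BE_{a_j}[N_j]\le \frac{T}{n-m+1}+O\bigl(\varepsilon T\sqrt{T/(n-m+1)}\bigr)$. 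Substituting into the regret decomposition, summing over the $m$ positions, and choosing $\varepsilon=\Theta\bigl(\sqrt{(n-m+1)/T}\bigr)$ makes each position contribute $\Omega(\sqrt{(n-m+1)T})$, for a total of $\Omega(m\sqrt{(n-m+1)T})$; tracking the constants through this optimization is what yields the stated factor $\tfrac{1}{16}$.

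The main obstacle is rigorously decoupling the $m$ positions, which involves two couplings. For the \emph{information} coupling, I must argue cleanly that the posterior over $a_j$ depends on the history only through the position-$j$ observations; this is where the column-wise independence of the loss matrix, together with the freedom to average over each $a_j$ separately, lets the per-position KL bounds be summed without interaction. For the \emph{combinatorial} coupling, the sub-permutation structure of $\cX$ prevents the algorithm from choosing the $m$ items fully independently, which I handle by noting that this constraint only \emph{restricts} the algorithm, so the per-position upper bounds on $\BE[N_j]$ remain valid (each position still receives at most one observation per round), while the hypotheses $n\ge\max\{m+3,2m\}$ guarantee enough spare items that distinctness of the $a_j$ and feasibility of $x^\ast$ never force the algorithm's hand in a way that would shrink the effective pool below $n-m+1$. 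Verifying these two decouplings carefully, and confirming the randomized construction is a legitimate PBM instance, is the crux; the remaining steps are the by-now-standard single-bandit minimax calculation applied $m$ times.
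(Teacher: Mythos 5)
Your proposal is correct and follows essentially the same route as the paper's own proof: the same hard instance family (independent Bernoulli losses with one hidden $\varepsilon$-better item per position, drawn as an $m$-permutation), the same per-position regret decomposition, the same reference instance in which column $j$ has no good item, the same Pinsker-plus-chain-rule KL bound restricted to position-$j$ observations, and the same choice $\varepsilon=\Theta\bigl(\sqrt{(n-m+1)/T}\bigr)$ yielding the constant $\tfrac{1}{16}$. The paper phrases the averaging over instances as summing over all $u\in\cS_n(m)$ and selecting a worst one rather than invoking Yao's principle, but this is only a cosmetic difference.
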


Our lower bound improves $O(\sqrt{m})$ over the state-of-the-art lower bound \cite{lattimore2018toprank} and matches our upper regret bound.%, and closes the existing gap for the OLTR, showing the optimality of our lower bound and the upper bound in \cite{li2019online}.

% \shuai{do we discuss the case of equality?}

% \zhao{Discussions on the case of equality of $\alpha$.}

\section{Experiments} \label{sec:exp}

This section compares the empirical performances of our $\ftrlpbm$ with related baselines where \texttt{TopRank} \cite{lattimore2018toprank}, \texttt{PBM-PIE} \cite{lagree2016multiple}, \texttt{PMED} \cite{komiyama2017position} are designed for the stochastic environment and \texttt{RankedExp3} \cite{radlinski2008learning},
%\texttt{MW}\footnote{The bandit algorithm for ordered slates using multiplicative weights algorithm.}
\texttt{MW} \cite{kale2010non} are designed for the adversarial environment. We do not include \cite{zimmert2019beating} since we could not find an efficient method for PBM with their hybrid regularizer. Since the vanilla \texttt{PBM-PIE} in \cite{lagree2016multiple} needs the knowledge of the position examination probabilities, we use a bi-convex optimization to estimate the  examination probabilities for \texttt{PBM-PIE} like \cite{komiyama2017position} rather than directly supplying. All parameters are kept the same as in their original papers. For all experiments, we use $n=10$ items and $m=5$ positions. 
% Due to the space limit, we put the experiments for real-world dataset in the appendix.

% \paragraph{Synthetic data}
We only present the results of experiments on synthetic data in this section,  The results of experiments on real-world data are deferred to Appendix \ref{app:real_data}. For the synthetic data,  we set the position examination probabilities to $\beta=(1,\frac{1}{2},\cdots,\frac{1}{5})$ which are commonly adopted in previous works \cite{wang2018position,li2019online}. The attractiveness of items are set as $\alpha=(0.95, 0.95-\Delta, 0.95-2\Delta,\cdots, 0.95-9\Delta)$. We consider two cases of $\Delta=0.03$ and $\Delta=0.01$.

% \begin{figure*}[tbh!]
% \centering
% \includegraphics[width=0.9\textwidth]{}
% \caption{This figure shows empirical comparisons between our $\ftrlpbm$ and \texttt{TopRank}, \texttt{PBM-PIE}, \texttt{PMED}, \texttt{RankedExp3} and \texttt{MW} in stochastic and periodic environments. We adopt the metric of averaged rewards, which are the cumulative rewards divided by time $t$. All results are averaged over $10$ random runs and the error bars are standard errors which are standard deviations divided by $\sqrt{10}$.
% % \vspace{-0.1cm}
% }
% \label{fig:exp}
% \end{figure*}

\begin{figure*}[tbh!]
\centering
\includegraphics[width=0.88\textwidth]{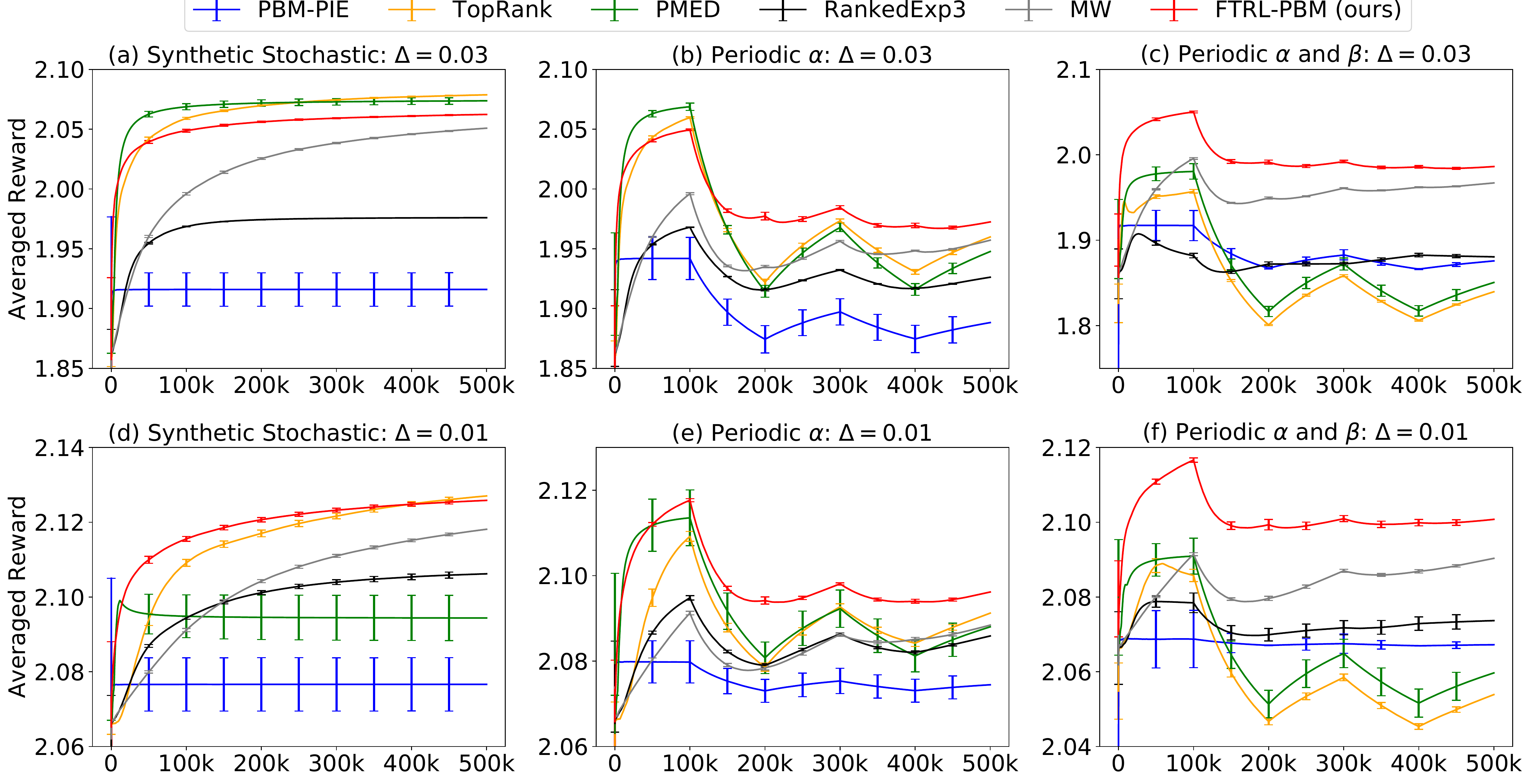}
\caption{This figure shows empirical comparisons between our $\ftrlpbm$ and \texttt{TopRank}, \texttt{PBM-PIE}, \texttt{PMED}, \texttt{RankedExp3} and \texttt{MW} in stochastic and periodic environments constructed by synthetic data. We adopt the metric of averaged rewards, which are the cumulative rewards divided by time $t$. All results are averaged over $10$ random runs and the error bars are standard errors which are standard deviations divided by $\sqrt{10}$.
% \vspace{-0.1cm}
}
\label{fig:exp}
\end{figure*}

We first construct stochastic environments \cite{lagree2016multiple,komiyama2017position} from the item attractiveness and position examination probabilities set above. The results are shown in Fig.\ref{fig:exp}(a)(d). 

Next we construct adversarial environments. Since it is a bit hard to design a real adversarial environment, we construct two periodical stochastic environments like \cite{zimmert2019optimal,zimmert2019beating}\footnote{They assume the relative order of items is fixed and the phase length is increasing.}.
We divide the whole time horizon into phases with $100$k rounds in each phase. For the first periodic environment, the position examination probabilities are fixed and the attractiveness of the first five items and last five items are exchanged periodically. Specifically, the odd phase uses the same environment as in the stochastic case and the even phase uses item attractiveness $(0.95-5\Delta, \cdots, 0.95-9\Delta, 0.95, \cdots, 0.95-4\Delta)$.
For the second periodic environment, the position examination probabilities are also changed periodically. We reverse the order of $\alpha,\beta$ simultaneously. Specifically, the odd phase uses the same environment as in the stochastic case and the even phase uses item attractiveness $(0.95-9\Delta, 0.95-8\Delta, \cdots, 0.95-\Delta, 0.95)$ and position examination probabilities $(\frac{1}{5}, \frac{1}{4}, \cdots, 1)$. 
% Note that the second periodic environment is more difficult to learn since \shuai{TBD}
The results are shown in Fig.\ref{fig:exp}(b)(c)(e)(f). 

\texttt{TopRank} performs best in (a)(d) since it is specially designed for the stochastic environment. \texttt{PMED} is also designed for the stochastic environment and has almost the same performance as \texttt{TopRank} in (a). Since it needs to solve a bi-convex optimization problem, fixed iterations would not give good convergence. Thus its performance has a large variance and deteriorates a lot for small gap (d). The design of \texttt{PBM-PIE} needs the knowledge of position bias. Though we can estimate them by bi-convex optimization, the estimation error would be amplified in the results when the estimated values are adopted directly. Then the performance of \texttt{PBM-PIE} is not very good and has a large variance in (a)(d). All of \texttt{TopRank}, \texttt{PMED} and \texttt{PBM-PIE} are strongly affected by the periodic changes (b)(c)(e)(f). 

Our algorithm $\ftrlpbm$ is competitive in stochastic environments and is best in  adversarial environments, showing the robustness and simultaneous learning ability of our algorithm. The greedy idea in \texttt{RankedExp3} is not very suitable for PBM but since it is designed for the adversarial environment, its performances are relatively stable. \texttt{MW} can be regarded as FTRL with negative Shannon entropy. It has good performances in some scenarios but is not the best due to the suboptimality of the regularizer.

\section{Conclusions}
\label{sec:conclusions}
To characterize the dynamic changes of online learning to rank (OLTR) environments, we study how to simultaneously learn in both stochastic and adversarial environments for OLTR under the position-based model (PBM). We design an algorithm based on the follow-the-regularized-leader framework and prove its efficiency in both environments. We also provide a lower bound for adversarial PBM which matches our upper bound. Experiments also validate the robustness of our algorithm.

Our results only focus on PBM. It would be a practical and promising topic to design efficient algorithms for both environments under general click models subsuming multiple click models. Further, the adversarial setting for general click models is open and suggested to be solved in the future.% next first step.

\section{Acknowledgments}
The corresponding author Shuai Li is supported by National Natural Science Foundation of China
(62006151, 62076161). This work is sponsored by Shanghai Sailing Program. Cheng Chen is supported by Singapore Ministry of Education (AcRF) Tier 2 grant MOE2018-T2-1-013.

\bibliography{bibfile}
\appendix

\onecolumn
% \section{Notations}
% Table \ref{tab:notation} summarizes the notations used throughout the paper. \shuai{this will be written in language in setting section and removed later}
% \begin{table}[]
% \caption{List of notations.}
% \label{tab:notation}
% \centering
% \begin{tabular}{|c|c|}
% \hline
% $n$ & Number of items \\ \hline
% $i$ & Item index \\ \hline
% $m$ & Number of positions\\ \hline
% $j$ & Position index \\ \hline
% $\alpha_1, \ldots, \alpha_n$ & Item attractiveness\\ \hline
% $\beta_1, \ldots, \beta_m$ & Position examination probabilities \\ \hline
% $T$ & Number of rounds \\ \hline
% $t$ & Time step\\\hline
% $\ell$ & Loss vector\\\hline
% $L$ & Aggregate loss\\\hline
% $\Psi$ & Regularizer\\\hline
% $I_t = (i_{t,1}, \ldots, i_{t,m})$ & Action list\\\hline
% $i_{t,j}$ & Action index \\\hline
% \end{tabular}
% \end{table}

\setcounter{secnumdepth}{2}
\section{Proof of Lemma \ref{lem:self1}}\label{app:lem:self}
% Before proving  Lemma \ref{lem:self}, we first introduce the following lemma for lower bounding the instantaneous regret in each round of the stochastic setting.
This section presents the proof of Lemma \ref{lem:self1}, which lower bounds the instantaneous regret in each round of the stochastic setting.
% \begin{lemma} \label{lem:self1}
% Let $i_1,i_2,\dots,i_m$ be any sequence chosen from $[n]$ without repetition. Then
% \begin{align*}
%     \sum_{j=1}^m(\alpha_j \beta_j-\alpha_{i_j}\beta_j)\ge \frac{1}{2} \sum_{j=1}^m\Delta_{i_j,j}\,.
% \end{align*}
% \end{lemma}

% The proof of Lemma \ref{lem:self1} is provided in Appendix \ref{app:selfBound}.

% \shuai{write
% Let $Y^t=(Y_1,Y_2,\dots,Y_t)\in\{0,1\}^{m\times t}$ be the sequence of feedback received by the learner up to round $t$ and $y^t=(y_1,y_2,\dots,y_t)\in\{0,1\}^{m\times t}$ be a specific value taken by $Y^t$. 
% somewhere
% }

\begin{proof}[Proof of Lemma \ref{lem:self1}]
We prove it by induction on $m$. For $m=1$,
\begin{align*}
    \alpha_1\beta_1 - \alpha_{i_1}\beta_1 =\beta_1(\alpha_1-\alpha_{i_1}) \ge (\beta_1-\beta_2)(\alpha_1-\alpha_{i_1})=\Delta_{i_1,1} \ge \frac{1}{2} \Delta_{i_1,1}\,.
\end{align*}
Suppose the conclusion holds for $m-1$. Now we consider the situation of $m$.

If $i_1=1$, by the induction hypothesis, there is
\begin{align*}
    \sum_{j=1}^m(\alpha_j\beta_j-\alpha_{i_j}\beta_j)=\sum_{j=2}^m(\alpha_j\beta_j-\alpha_{i_j}\beta_j)\ge \frac{1}{2} \sum_{j=2}^m\Delta_{i_j,j} = \frac{1}{2} \sum_{j=1}^m\Delta_{i_j,j}\,.
\end{align*}
% \shuai{TBD}
If $i_1 \neq 1$, it suffices to consider the following two cases.

Case (a). If $i_j\neq 1$ for all $j\in[m]$, we define a sequence $(i_j')_{j=1}^m$ by $i_1'=1$ and $i_j'=i_j$ for $j\geq 2$. 
% By induction, we know that 
By induction, one can see that
\begin{equation} \label{eq:1}
\sum_{j=1}^m(\alpha_j\beta_j-\alpha_{i_j'}\beta_j)=\sum_{j=2}^m(\alpha_j\beta_j-\alpha_{i_j'}\beta_j)\geq \frac{1}{2}\sum_{j=2}^m \Delta_{i_j,j}\, .
\end{equation}
% Since $\alpha_1\geq \alpha_{i_1}$, we have
Since $\alpha_1\geq \alpha_{i_1}$, it is clear that
% \begin{equation} \label{eq:2}
% \sum_{j=1}^m(\alpha_{i_j'}\beta_j-\alpha_{i_j}\beta_j)=\beta_1(\alpha_1-\alpha_{i_1})\geq (\beta_1-\beta_2)(\alpha_1-\alpha_{i_1})=2\Delta_{i_1,1}\, .
% \end{equation}
\begin{equation} \label{eq:2}
\alpha_{i_1'}\beta_1-\alpha_{i_1}\beta_1=\beta_1(\alpha_1-\alpha_{i_1})\geq (\beta_1-\beta_2)(\alpha_1-\alpha_{i_1})=\Delta_{i_1,1}\, .
\end{equation}
Adding Eq.\eqref{eq:1} and Eq.\eqref{eq:2} shows that
\begin{align*}
    \sum_{j=1}^m(\alpha_j\beta_j-\alpha_{i_j}\beta_j)\geq \Delta_{i_1,1}+\frac{1}{2}\sum_{j=2}^m \Delta_{i_j,j}\geq \frac{1}{2}\sum_{j=1}^m \Delta_{i_j,j}\, . 
\end{align*}

Case (b). If there exists $k\in\{2,3,\dots,m\}$ such that $i_k=1$, we define a sequence $(i_j')_{j=1}^m$ by $i_1'=1$, $i_k'=i_1$  and $i_j'=i_j$ for $j\neq 1,k$. 
By induction, one can see that
\begin{equation} \label{eq:3}
\begin{split}
    \sum_{j=1}^m(\alpha_j\beta_j-\alpha_{i_j'}\beta_j)=& \sum_{j=2}^m(\alpha_j\beta_j-\alpha_{i_j'}\beta_j)\\
    \geq& \frac{1}{2}\sum_{j=2}^m\Delta_{i_j',j}= \frac{1}{2}\sum_{j=1}^m\Delta_{i_j,j}-\frac{1}{2}(\Delta_{i_1,1}+\Delta_{1,k}-\Delta_{i_1,k})\, .
\end{split}
\end{equation}
Also, it holds that
\begin{equation} \label{eq:4}
\sum_{j=1}^m(\alpha_{i_j'}\beta_j-\alpha_{i_j}\beta_j)=\beta_1(\alpha_1-\alpha_{i_1})+\beta_k(\alpha_{i_1}-\alpha_1)= (\beta_1-\beta_k)(\alpha_1-\alpha_{i_1})\, .
\end{equation}
% By adding Eq.(\ref{eq:3}) and Eq.(\ref{eq:4}) we can obtain 
Adding Eq.\eqref{eq:3} and Eq.\eqref{eq:4} shows that
\begin{align*}
    \sum_{j=1}^m(\alpha_j\beta_j-\alpha_{i_j}\beta_j)\geq \frac{1}{2}\sum_{j=1}^m\Delta_{i_j,j}+(\beta_1-\beta_k)(\alpha_1-\alpha_{i_1})-\frac{1}{2}(\Delta_{i_1,1}+\Delta_{1,k}-\Delta_{i_1,k})\,,
\end{align*}
% Then we aim to show $(\beta_1-\beta_k)(\alpha_1-\alpha_{i_1})\geq\Delta_{i_1,1}+\Delta_{1,k}-\Delta_{i_1,k}$ in the following three cases:
which remains to show $(\beta_1-\beta_k)(\alpha_1-\alpha_{i_1})\geq \frac{1}{2}(\Delta_{i_1,1}+\Delta_{1,k}-\Delta_{i_1,k})$. We consider the following three cases.
\begin{enumerate}
    \item If $k=i_1$, then $\Delta_{i_1,k}=0$ and
        \begin{align*}
        % \begin{split}
            \frac{1}{2}(\Delta_{i_1,1}+\Delta_{1,k}-\Delta_{i_1,k})=& \frac{1}{2}(\beta_1-\beta_2+\beta_{k-1}-\beta_k)(\alpha_1-\alpha_k)\\
            \leq& (\beta_1-\beta_k)(\alpha_1-\alpha_k)=(\beta_1-\beta_k)(\alpha_1-\alpha_{i_1})\, .
        % \end{split}
        \end{align*}
    \item If $k<i_1$, then $\alpha_k\geq\alpha_{i_1}$ and
        \begin{align*}
        % \begin{split}
        \frac{1}{2}(\Delta_{i_1,1}+\Delta_{1,k}-\Delta_{i_1,k})\leq& \frac{1}{2}(\Delta_{i_1,1}+\Delta_{1,k})\\
        =&\frac{1}{2}((\beta_1-\beta_2)(\alpha_1-\alpha_{i_1})+(\beta_{k-1}-\beta_k)(\alpha_1-\alpha_{k}))\\
        \leq& \frac{1}{2}((\beta_1-\beta_2)(\alpha_1-\alpha_{i_1})+(\beta_{k-1}-\beta_k)(\alpha_1-\alpha_{i_1}))\\
        \leq& (\beta_1-\beta_k)(\alpha_1-\alpha_{i_1})\, .
        % \end{split}
        \end{align*}
    \item If $k>i_1$, then $\alpha_k\leq\alpha_{i_1}$ and
        \begin{align*}
        % \begin{split}
        &\frac{1}{2}(\Delta_{i_1,1}+\Delta_{1,k}-\Delta_{i_1,k})\\
        =&\frac{1}{2}((\beta_1-\beta_2)(\alpha_1-\alpha_{i_1})+(\beta_{k-1}-\beta_k)(\alpha_1-\alpha_{k})-(\beta_{k-1}- \beta_k)(\alpha_{i_1}-\alpha_k))\\
        =&\frac{1}{2}((\beta_1-\beta_2)(\alpha_1-\alpha_{i_1})+(\beta_{k-1}-\beta_k)(\alpha_1-\alpha_{i_1}))\leq (\beta_1-\beta_k)(\alpha_1-\alpha_{i_1})\, .    
        % \end{split}
        \end{align*}
\end{enumerate}
% \textbf{1)} 

% \textbf{2)} 

% \textbf{3)} 

To sum up, we can achieve
\begin{align*}
    \sum_{j=1}^m(\alpha_j\beta_j-\alpha_{i_j}\beta_j)\geq \frac{1}{2}\sum_{j=1}^m\Delta_{i_j,j}\, .
\end{align*}

\end{proof}

% Lemma \ref{lem:self} follows immediately from Lemma \ref{lem:self1}.
% \begin{proof}[Proof of Lemma \ref{lem:self}]
% Lemma \ref{lem:self1} implies that
% \begin{align*}
%     R(T)=\sum_{t=1}^T\sum_{j=1}^m\BE[\beta_j\alpha_j-\beta_j\alpha_{I_{t,j}}]\geq\sum_{t=1}^T\sum_{j=1}^m\BE[\frac{1}{2}\Delta_{I_{t,j},j}]=\frac{1}{2}\sum_{t=1}^T\sum_{i=1}^n\sum_{j=1}^m \Delta_{i,j}\BE[x_{t,i,j}]\,,
% \end{align*}
% which completes the proof.
% \end{proof}

%%%%%%%%%%%%%%%%%%%%%%%%%%%%%%%%%%%%%%%%%%%%%%%%%%%%%%%%%%%%%%%%%%%%%
% \section{Main Results}\label{app:upperBound}
% \section{Proof of Lemma \ref{lem:penalty} and Lemma \ref{lem:stability}}\label{app:upperBound}

\section{Proof of Theorem \ref{thm:adv}} \label{app:upperBound}
%\zhao{
%\section{Lemma \ref{lem:penalty} and Lemma %\ref{lem:stability}}\label{app:upperBound}
%}
%\shuai{change section name}
% This section provides the omitted details for the proof of Lemma \ref{lem:penalty} and Lemma \ref{lem:stability}.
%\zhao{
%This section presents Lemma \ref{lem:penalty}, Lemma \ref{lem:stability} and the proofs of them.
%}
In this section, we provide the details for the proof of Theorem \ref{thm:adv}. Before presenting the proof, we introduce some definitions and tools from the convex analysis in section \ref{sec:pre}. Then we analyze the regularization penalty term and the stability term in section \ref{sec:pen} and section \ref{sec:stab}, respectively. %Then we combine them to achieve the conlusion of Theorem \ref{thm:adv}.

\subsection{Preliminaries} \label{sec:pre}
Define $\tPsi(\cdot)=\Psi(\cdot)+\langle \cdot, \vone\rangle$ and $\tPsi_t(\cdot)=\eta_t^{-1}\tPsi(\cdot)$. Let $\bar{\BR}=\BR\bigcup\{-\infty, +\infty\}$ be the extended real number system. We can extend the range of $\tPsi$ to $\bar{\BR}$ by setting $\tPsi(x)=\infty$ for any $x\in \BR^{n\times m}\setminus\cD$, where $\cD=[0,+\infty)^{n\times m}$ is the domain of $\tPsi$. Then $\tPsi$ is a Legendre function \cite{rockafellar2015convex}.

%Suppose $f:\BR^d \rightarrow \BR$ is a convex function and $x, y\in\BR^d$ with $y\in \dom(f)$. 
The Fenchel conjugate of a convex function $f$ is defined as
\begin{align*}
    f^*(\cdot)=\max_{x\in\BR^d}\inner{x}{\cdot}-f(x)\,.
\end{align*}
Then for any $y\leq 0$, the Fenchel conjugate of $\tPsi_t(\cdot)$ is
\begin{align*}
    \tPsi_t^*(y) = \max_{x\in \cD}\langle x,y \rangle-\tPsi_t(x)=\sum_{i=1}^n\sum_{j=1}^m \frac{1}{4\eta_t(1-\eta_ty_{i,j})}\,.
\end{align*}

Let $\tPhi_t(\cdot) = \max_{x\in\Conv(\cX)}\ \langle x, \cdot\rangle - \tPsi_t(x)$. It is clear that $\tPhi_t(\cdot)$ is the Fenchel conjugate of $\tPsi_t+\cI_{\Conv(\cX)}$. %Note that the difference between $\Psi_t^*(\cdot)$ and $\Phi_t(\cdot)$ is that the maximizer of $\Psi_t^*(\cdot)$ is constrained on $\cD$ while the maximizer of $\Phi_t(\cdot)$ is constrained on $\Conv(\cX)$. 
According to Section 26 of \cite{rockafellar2015convex}, the following properties hold
\begin{align}
    \nabla\tPsi_t&=(\nabla\tPsi^*_t)^{-1}\,,\label{eq:legendre}\\
    %\nabla\Psi_t(x)&=\sum_i-\frac{1}{2\eta_t}x^{-\frac{1}{2}}_i\,,\\
    \nabla\tPhi_t(\cdot)&=\argmax_{x\in\Conv(\cX)}\inner{x}{\cdot}-\tPsi_t(x)\,, \nonumber \\
    \nabla\tPsi^*_t(\cdot)&=\argmax_{x\in\cD}\inner{x}{\cdot}-\tPsi_t(x)\,, \nonumber\\
    \nabla\Phi_t(\cdot)&=\argmax_{x\in\Conv(\cX)}\inner{x}{\cdot}-\Psi_t(x)\,. \nonumber
\end{align}
Since $\inner{x}{\vone}=m$ for any $x\in\Conv(\cX)$,
we know that 
\begin{align} \label{eq:dual}
\nabla\tPhi_t(-\hat{L}_{t-1})=\nabla\Phi_t(-\hat{L}_{t-1})=x_t,
\end{align}
which is the regularized leader of FTRL.
%\zhao{introduce $\nabla\Phi_t$?}

%Since $\Psi_t(\cdot)$ is a Legendre function, it enjoys the property that $\nabla\Psi_t$ is a bijection between $\textrm{int}(\cD)$ and $\BR^d$, which means that 
%\begin{align}
%    \nabla\Psi_t=(\nabla\Psi^*_t)^{-1}\,.\label{eq:legendre}
%\end{align}
%Hence $\nabla\Psi^*_t(L)\in \cD$ for any $L\in \BR^{n\times m}$.
%Analogously, $\nabla\Phi_t(L)\in \Conv(\cX)$ for any $L\in \BR^{n\times m}$. The standard analysis of FTRL shows that
%\begin{align}
%    x_t=\nabla\Phi(-\hat{L}_{t-1})\label{eq:dual}
%\end{align}
%and thus all $x_t$'s in the following are in the domain $\Conv(\cX)$.

The Bregman divergence associated with a Legendre function $f$ is defined as
\begin{align*}
    D_f(x,y)=f(x)-f(y)-\inner{\nabla f(y)}{x-y}\,.
\end{align*}

\begin{lemma}[Lemma 4 of \cite{zimmert2019beating}]\label{bregmanDiv1}
    % For any $\ell\in\BR^{m\times n}$, we have
    %For any $\ell, -\hL_{t-1}\in\BR^{n\times m}$, 
    For any $L\in(-\infty,0]^{n\times m}$, let $\tL=\nabla\tPsi_t(\nabla\tPhi_t(L))$. Then for any $\ell\in(-\infty,0]^{n\times m}$, it holds that
    % we have 
    \begin{align*}
        D_{\tPhi_t}(L+\ell,L)\leq D_{\tPsi_t^*}(\tL+\ell,\tL)\,.
    \end{align*}
\end{lemma}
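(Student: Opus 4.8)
The plan is to reduce both sides to primal Bregman divergences of $\tPsi_t$ evaluated at the relevant maximizers, and then compare them through a single auxiliary concave objective.

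First I would name the three maximizers that govern the two divergences: set $x^\ast=\nabla\tPhi_t(L)$ and $y^\ast=\nabla\tPhi_t(L+\ell)$, both lying in $\Conv(\cX)$, and $z^\ast=\nabla\tPsi_t^\ast(\tL+\ell)\in\cD$. By construction $\tL=\nabla\tPsi_t(x^\ast)$, so Eq.~\eqref{eq:legendre} gives $\nabla\tPsi_t^\ast(\tL)=x^\ast$; hence $x^\ast$ is the common gradient appearing in both Bregman divergences. Expanding $D_{\tPhi_t}(L+\ell,L)$ with $\tPhi_t(L+\ell)=\inner{y^\ast}{L+\ell}-\tPsi_t(y^\ast)$ and $\tPhi_t(L)=\inner{x^\ast}{L}-\tPsi_t(x^\ast)$, and substituting $-\tPsi_t(y^\ast)+\tPsi_t(x^\ast)=-D_{\tPsi_t}(y^\ast,x^\ast)-\inner{\tL}{y^\ast-x^\ast}$, collapses the left-hand side to
\[
D_{\tPhi_t}(L+\ell,L)=\inner{y^\ast-x^\ast}{L-\tL}+\inner{y^\ast-x^\ast}{\ell}-D_{\tPsi_t}(y^\ast,x^\ast).
\]
An entirely parallel computation for the right-hand side, using $\nabla\tPsi_t^\ast(\tL)=x^\ast$ and $\nabla\tPsi_t(z^\ast)=\tL+\ell$, yields $D_{\tPsi_t^\ast}(\tL+\ell,\tL)=\inner{z^\ast-x^\ast}{\ell}-D_{\tPsi_t}(z^\ast,x^\ast)$.

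Next I would introduce the concave function $F(x)=\inner{x-x^\ast}{\ell}-D_{\tPsi_t}(x,x^\ast)$ on $\cD$, so that the right-hand side is exactly $F(z^\ast)$. Since $\nabla F(x)=\ell+\tL-\nabla\tPsi_t(x)$ vanishes precisely at $z^\ast=\nabla\tPsi_t^\ast(\tL+\ell)$ and $F$ is concave, $z^\ast$ maximizes $F$ over $\cD$. For the left-hand side, the first-order optimality of the constrained maximizer $x^\ast=\argmax_{x\in\Conv(\cX)}\inner{x}{L}-\tPsi_t(x)$ states that $L-\tL$ lies in the normal cone of $\Conv(\cX)$ at $x^\ast$, i.e.\ $\inner{L-\tL}{y^\ast-x^\ast}\le 0$ because $y^\ast\in\Conv(\cX)$. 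Dropping this nonpositive term bounds the left-hand side above by $F(y^\ast)$.

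Finally, since $\Conv(\cX)\subseteq\cD$ we have $y^\ast\in\cD$, hence $F(y^\ast)\le\max_{x\in\cD}F(x)=F(z^\ast)$, which chains to $D_{\tPhi_t}(L+\ell,L)\le F(y^\ast)\le F(z^\ast)=D_{\tPsi_t^\ast}(\tL+\ell,\tL)$ as desired. I expect the main obstacle to be the careful justification of the gradient and optimality manipulations: because $\tPsi_t$ is Legendre with the steep barrier term $-\eta_t^{-1}\sum_i\sqrt{x_i}$, the constrained maximizer $x^\ast$ is attained in the interior of $\cD$, and it is this interiority that makes $\tL=\nabla\tPsi_t(x^\ast)$ finite, legitimizes $\nabla\tPsi_t^\ast(\tL)=x^\ast$, and guarantees the normal-cone inequality holds with $\tPsi_t$ differentiable at $x^\ast$. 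Verifying this interiority, and that $z^\ast$ is likewise attained in $\cD$, is the delicate part of the argument.
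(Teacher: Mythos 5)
Your proof is correct, but note that there is no in-paper argument to compare it against: the paper imports this statement wholesale as Lemma 4 of \cite{zimmert2019beating} and never proves it. Your derivation is a valid, self-contained proof, and it is essentially a reorganization of the one in that source. Zimmert et al.\ argue directly at the level of conjugates: writing $x^\ast=\nabla\tPhi_t(L)$, the variational inequality $\inner{L-\tL}{x-x^\ast}\le 0$ for $x\in\Conv(\cX)$ plus the relaxation $\Conv(\cX)\subseteq\cD$ give $\tPhi_t(L+\ell)\le \tPsi_t^*(\tL+\ell)+\inner{x^\ast}{L-\tL}$, while $\tPhi_t(L)=\tPsi_t^*(\tL)+\inner{x^\ast}{L-\tL}$ holds exactly because $\nabla\tPsi_t^*(\tL)=x^\ast$; subtracting and removing $\inner{x^\ast}{\ell}$ yields the claim in three lines. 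You instead collapse both divergences to primal Bregman divergences anchored at $x^\ast$ and compare them through the auxiliary concave function $F$, whose maximizer over $\cD$ is $z^\ast=\nabla\tPsi_t^*(\tL+\ell)$. The two ingredients are identical (normal-cone inequality at the constrained maximizer, and enlarging the feasible set from $\Conv(\cX)$ to $\cD$), so the proofs are morally the same; yours makes the geometry more explicit at the cost of extra algebra, while the original is shorter because it never leaves the dual side. Your closing concern about interiority is the right one, and it admits a one-line resolution you should include: since the barrier $-\eta_t^{-1}\sqrt{x_{i,j}}$ has infinite slope at $x_{i,j}=0$ and $\Conv(\cX)$ contains points with all entries positive (e.g.\ the matrix with every entry $1/n$), the maximizers $\nabla\tPhi_t(L)$ and $\nabla\tPhi_t(L+\ell)$ have strictly positive entries; moreover $\tL_{i,j}\le 1/(2\eta_t)$ and $\ell\le 0$ imply $z^\ast_{i,j}=\frac{1}{4}\left(1-\eta_t(\tL_{i,j}+\ell_{i,j})\right)^{-2}$ is finite and positive, so every gradient manipulation you perform is legitimate.
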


\subsection{Regularization penalty term} \label{sec:pen}

\begin{lemma} \label{lem:penalty}
The regularization penalty term can be bounded as
\begin{align*}
    R_{pen} \leq  \sum_{t=1}^T\sum_{j=1}^m \sum_{i\neq I^*_j}\frac{1}{\sqrt{t}}\left(2\sqrt{\BE[\xtij]}-\BE[\xtij]\right)\,.
\end{align*}
\end{lemma}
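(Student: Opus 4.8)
The plan is to run the standard time-varying FTRL penalty analysis, but to steer the boundary bookkeeping so that the leading $\Theta(m\sqrt T)$ terms cancel \emph{exactly} and what survives is precisely the off-optimal sum in the statement. First I would replace the true loss by its estimator inside the expectation: since $x^\ast$ is deterministic and $\BE_t[\hl_t]=\ell_t$, we have $\BE[\sum_t\langle x^\ast,\ell_t\rangle]=\BE[\langle x^\ast,\hL_T\rangle]$, so that $R_{pen}=\BE[\sum_{t=1}^T(\Phi_t(-\hL_{t-1})-\Phi_t(-\hL_t))-\langle x^\ast,\hL_T\rangle]$. I would then Abel-sum the middle differences, aligning terms on the common argument $-\hL_{t-1}$, to obtain
\[
\sum_{t=1}^T\big(\Phi_t(-\hL_{t-1})-\Phi_t(-\hL_t)\big)=\Phi_1(\vzero)-\Phi_T(-\hL_T)+\sum_{t=2}^T\big(\Phi_t(-\hL_{t-1})-\Phi_{t-1}(-\hL_{t-1})\big),
\]
using $\hL_0=\vzero$. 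The terminal piece is handled by the conjugate inequality $\Phi_T(-\hL_T)\ge\langle x^\ast,-\hL_T\rangle-\Psi_T(x^\ast)$, which gives $-\Phi_T(-\hL_T)-\langle x^\ast,\hL_T\rangle\le\Psi_T(x^\ast)=\tfrac1{\eta_T}\Psi(x^\ast)=-2m\sqrt T$, since $x^\ast$ has exactly $m$ unit entries.

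Next I would bound each per-round difference. Because $x_t=\nabla\Phi_t(-\hL_{t-1})$ is the maximizer defining $\Phi_t(-\hL_{t-1})$ by Eq.\eqref{eq:dual}, and is feasible for $\Phi_{t-1}$, optimality and feasibility give
\[
\Phi_t(-\hL_{t-1})-\Phi_{t-1}(-\hL_{t-1})\le\Psi_{t-1}(x_t)-\Psi_t(x_t)=\Big(\tfrac1{\eta_{t-1}}-\tfrac1{\eta_t}\Big)\Psi(x_t)=(2\sqrt t-2\sqrt{t-1})\sum_{i,j}\sqrt{(x_t)_{i,j}},
\]
using $\Psi(x)=-\sum_{i,j}\sqrt{x_{i,j}}\le0$ and $1/\eta_t=2\sqrt t$ increasing; likewise $\Phi_1(\vzero)=\tfrac1{\eta_1}\sum_{i,j}\sqrt{(x_1)_{i,j}}=2\sum_{i,j}\sqrt{(x_1)_{i,j}}$.

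The crux is a per-column reshaping that produces the off-optimal sum and cancels the boundary constants. For each column $j$ the constraint $\sum_i(x_t)_{i,j}=1$ gives $(x_t)_{I^\ast_j,j}=1-\sum_{i\ne I^\ast_j}(x_t)_{i,j}$, so by $\sqrt{1-u}\le1-u/2$ one has $\sum_i\sqrt{(x_t)_{i,j}}\le1+\sum_{i\ne I^\ast_j}\big(\sqrt{(x_t)_{i,j}}-\tfrac12(x_t)_{i,j}\big)$; summing over $j$ contributes an $m$ plus the off-optimal terms. The constant $m$'s telescope, $\sum_{t=2}^T(2\sqrt t-2\sqrt{t-1})m+2m=2m\sqrt T$, which cancels exactly against $-2m\sqrt T=\Psi_T(x^\ast)$, leaving no net constant. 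Finally I bound the coefficient $2\sqrt t-2\sqrt{t-1}=2/(\sqrt t+\sqrt{t-1})\le2/\sqrt t$ (the exact value $2=2/\sqrt1$ at $t=1$), which is legitimate because $\sqrt u-u/2\ge0$ on $[0,1]$; this converts each $(2\sqrt t-2\sqrt{t-1})(\sqrt{(x_t)_{i,j}}-\tfrac12(x_t)_{i,j})$ into $\tfrac1{\sqrt t}(2\sqrt{(x_t)_{i,j}}-(x_t)_{i,j})$. Taking expectations and pushing them inside the square root by Jensen, $\BE[\sqrt{\xtij}]\le\sqrt{\BE[\xtij]}$, then yields exactly the claimed bound.

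I expect the main obstacle to be this boundary bookkeeping: arranging the Abel summation so that $x_t$ is paired with the learning-rate gap at the matching index, and verifying that the per-column $m$'s, the initial $\Phi_1(\vzero)$, and the terminal $\Psi_T(x^\ast)=-2m\sqrt T$ cancel to \emph{exactly} zero rather than leaving an $O(m\sqrt T)$ or $O(m\sqrt n)$ residue. The remaining ingredients (the conjugate inequalities, $\sqrt{1-u}\le1-u/2$, the nonnegativity of $\sqrt u-u/2$, and Jensen) are routine.
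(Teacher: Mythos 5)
Your proof is correct and takes essentially the same route as the paper: after your Abel summation and the per-round bounds, your intermediate expression $\eta_T^{-1}\Psi(x^*)-\eta_1^{-1}\Psi(x_1)-\sum_{t=2}^T(\eta_t^{-1}-\eta_{t-1}^{-1})\Psi(x_t)$ is algebraically identical to the paper's $\eta_1^{-1}(\Psi(x^*)-\Psi(x_1))+\sum_{t=2}^T(\eta_t^{-1}-\eta_{t-1}^{-1})(\Psi(x^*)-\Psi(x_t))$, with your "exact cancellation of the $2m\sqrt{T}$ boundary terms" being the paper's absorption of $\Psi(x^*)=-m$ into the per-round differences. The remaining steps also coincide: your column-wise inequality $\sqrt{1-u}\le 1-u/2$ is the same inequality as the paper's $\sqrt{u}-u/2\le \tfrac12$ used to discard the $i=I_j^*$ entries, followed by $\sqrt{t}-\sqrt{t-1}\le 1/\sqrt{t}$ (valid on nonnegative terms) and Jensen, so no gap remains.
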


% \zhao{
% \begin{proof}[Proof of Lemma \ref{lem:penalty}]
\begin{proof}
% By the definition of $\Phi_t(\cdot)$, one can see that

Recall $\Phi_t(\cdot)$ is defined as $\Phi_t(\cdot) = \max_{x\in\Conv(\cX)}\ \langle x, \cdot\rangle - \Psi_t(x)$. One can see that

\begin{align}
        & \sum_{t=1}^T \left( -\Phi_t(-\hL_t)+\Phi_t(-
\hL_{t-1})-\langle x^*,\hl_t\rangle \right)\notag\\
=& \sum_{t=1}^T\left(\min_{x\in\Conv(\cX)}\left\{\langle x,\hL_t \rangle+\eta_t^{-1}\Psi(x)\right\}-\min_{x\in\Conv(\cX)}\left\{\langle x,\hL_{t-1} \rangle+\eta_t^{-1}\Psi(x)\right\}-\langle x^*,\hl_t\rangle\right)\notag\\
=& \sum_{t=1}^T\left(\min_{x\in\Conv(\cX)}\left\{\langle x,\hL_t \rangle+\eta_t^{-1}\Psi(x)\right\}-\left(\langle x_t,\hL_{t-1} \rangle+\eta_t^{-1}\Psi(x_t)\right)-\langle x^*,\hl_t\rangle\right)\notag\\
\leq& \langle x^*, \hL_T\rangle +\eta_T^{-1}\Psi(x^*)+\sum_{t=1}^{T-1}\left(\langle x_{t+1},\hL_{t} \rangle+\eta_t^{-1}\Psi(x_{t+1})\right)-\sum_{t=1}^T\left(\langle x_t,\hL_{t-1} \rangle+\eta_t^{-1}\Psi(x_t)\right)-\langle x^*,\hL_T\rangle\notag\\
=& \eta_T^{-1}\Psi(x^*)+\sum_{t=1}^{T-1}\eta_t^{-1}\Psi(x_{t+1})-\sum_{t=1}^T\eta_t^{-1}\Psi(x_t)\notag\\
=& \eta_1^{-1}(\Psi(x^*)-\Psi(x_1))+\sum_{t=2}^T(\eta_t^{-1}-\eta_{t-1}^{-1})(\Psi(x^*)-\Psi(x_t))\notag\\
=& 2(\Psi(x^*)-\Psi(x_1))+2\sum_{t=2}^T(\sqrt{t}-\sqrt{t-1})(\Psi(x^*)-\Psi(x_t))\,.\label{eq:lem4}
%\leq& \sum_{t=1}^T 4\eta_t(\Psi(x^*)-\Psi(x_t))
\end{align}
where the third equation comes from the fact $\hL_0=0$.\\
Using $\BE[\ell_t]=\BE[\hat{\ell}_t]$ and taking expectations of both sides of Eq.\eqref{eq:lem4} lead to
\begin{align}\label{eq:pen5}
    R_{pen} \leq \EE{}{2(\Psi(x^*)-\Psi(x_1))+2\sum_{t=2}^T(\sqrt{t}-\sqrt{t-1})(\Psi(x^*)-\Psi(x_t))}\,.
\end{align}

% Recall the definition of $\Psi(\cdot)$ in Eq.\eqref{eq:regularizer}. 
Since $\Psi(x)= \sum_i-\sqrt{x_i}$, it holds that
%The difference of $\Psi(x^*)-\Psi(x_t)$ is bounded as
\begin{align}
        \Psi(x^*)-\Psi(x_t) &=\sum_{i=1}^n\sum_{j=1}^m\sqrt{\xtij}-\sum_{j=1}^m\sum_{i:i=I^*_j}\sqrt{1}\notag\\
    &=\sum_{i=1}^n\sum_{j=1}^m(\sqrt{\xtij}-\frac{1}{2}\xtij)-\sum_{j=1}^m\sum_{i:i=I^*_j}(\sqrt{1}-\frac{1}{2})\notag\\
    % &\leq \sum_{i=1}^n \sum_{j\neq I^*_i}(\sqrt{\xtij}-\frac{1}{2}\xtij).\\
    &\leq \sum_{j=1}^m \sum_{i:i\neq I^*_j}(\sqrt{\xtij}-\frac{1}{2}\xtij)\,,\label{eq:eq9}
\end{align}
where the last inequality is due to $\sqrt{x_{t,i,j}}-\frac{x_{t,i,j}}{2}\leq\frac{1}{2}$ for any $0\leq x_{t,i,j}\leq 1$.

Substituting Eq.\eqref{eq:eq9} into Eq.\eqref{eq:pen5} shows that
\begin{align*}
    R_{pen} &\leq \BE\left[2(\Psi(x^*)-\Psi(x_1))+2\sum_{t=2}^T(\sqrt{t}-\sqrt{t-1})(\Psi(x^*)-\Psi(x_t))\right]\\
    &\leq \BE\left[\sum_{t=1}^T\frac{2}{\sqrt{t}}(\Psi(x^*)-\Psi(x_t))\right]\\
    &\leq \BE\left[\sum_{t=1}^T\sum_{j=1}^m \sum_{i:i\neq I^*_j}\frac{2}{\sqrt{t}}(\sqrt{\xtij}-\frac{1}{2}\xtij)\right]\\
    &\leq \sum_{t=1}^T\sum_{j=1}^m \sum_{i:i\neq I^*_j}\frac{1}{\sqrt{t}}(2\sqrt{\BE[\xtij]}-\BE[\xtij])\,,
\end{align*}
% where the last step comes form Jensen inequality.
where the second inequality is due to $\sqrt{t}-\sqrt{t-1}\leq \frac{1}{\sqrt{t}}$ for any $t\geq1$ and the last step comes from Jensen’s inequality.
\end{proof}

\subsection{Stability term} \label{sec:stab}
\label{app:stab}

\begin{lemma} \label{lem:stability}
The stability term can be bounded as
\begin{align*}
    R_{stab}\leq 3m + 2m\log T + \sum_{t=4}^T\left[ \frac{1}{\sqrt{t}}\sum_{j=1}^m\sum_{i\neq I^*_j} \left(\sqrt{\BE[\xtij]}+\BE[\xtij]\right)\right]\,.
\end{align*}
\end{lemma}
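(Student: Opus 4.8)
The plan is to follow the standard FTRL stability analysis, reducing $R_{stab}$ to a sum of one-step Bregman divergences and then controlling each divergence through the separable Fenchel conjugate $\tPsi_t^*$. First I would rewrite the one-step difference using the first-order optimality condition $\nabla\Phi_t(-\hL_{t-1})=x_t$ from Eq.~\eqref{eq:dual}. By the definition of the Bregman divergence, $\Phi_t(-\hL_t)-\Phi_t(-\hL_{t-1})=\inner{x_t}{-\hl_t}+D_{\Phi_t}(-\hL_t,-\hL_{t-1})$. Since $\BE_t[\inner{X_t}{\ell_t}]=\inner{x_t}{\ell_t}$ and $\BE_t[\hl_t]=\ell_t$ together give $\BE[\inner{X_t}{\ell_t}]=\BE[\inner{x_t}{\hl_t}]$, the first-order terms cancel in expectation, leaving the clean representation $R_{stab}=\BE\big[\sum_{t=1}^T D_{\Phi_t}(-\hL_t,-\hL_{t-1})\big]$.

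Next I would pass from $\Phi_t$ to the smoother, Legendre-type $\tPhi_t$. Because $\inner{x}{\vone}=m$ is constant on $\Conv(\cX)$, the regularizers $\tPsi_t$ and $\Psi_t$ differ only by this additive constant there, so $\tPhi_t$ and $\Phi_t$ differ by a constant and $D_{\Phi_t}=D_{\tPhi_t}$. Applying Lemma~\ref{bregmanDiv1} with $L=-\hL_{t-1}\in(-\infty,0]^{n\times m}$ and $\ell=-\hl_t\in(-\infty,0]^{n\times m}$ then moves the estimate to the conjugate side, $D_{\tPhi_t}(-\hL_t,-\hL_{t-1})\le D_{\tPsi_t^*}(\tL_t-\hl_t,\tL_t)$, where $\tL_t=\nabla\tPsi_t(\nabla\tPhi_t(-\hL_{t-1}))=\nabla\tPsi_t(x_t)$.

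Since $\tPsi_t^*(y)=\sum_{i,j}\frac{1}{4\eta_t(1-\eta_t y_{i,j})}$ is separable, the divergence splits into per-coordinate pieces. Inverting $\nabla\tPsi_t^*(\tL_t)=x_t$ yields the identity $1-\eta_t\tL_{t,i,j}=\tfrac{1}{2\sqrt{\xtij}}$, and a direct one-dimensional computation gives the exact term $\frac{\eta_t\,\ltij^{2}\,\xtij}{\frac{1}{2\sqrt{\xtij}}+\eta_t\ltij}$. Dropping $\eta_t\ltij$ from the denominator bounds it by $2\eta_t\,\ltij^{2}\,\xtij^{3/2}$; taking $\BE_t[\cdot]$ with $\BE_t[\bOne{X_{t,i,j}=1}]=\xtij$ and $\ell_{t,i,j}\le 1$ collapses this to the local-norm bound $2\eta_t\sqrt{\xtij}$ per coordinate, which equals $\tfrac{1}{\sqrt t}\sqrt{\xtij}$ under $\eta_t=\tfrac{1}{2\sqrt t}$.

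Finally I would sum over coordinates, splitting the $m$ optimal entries $i=I^*_j$ from the rest. The off-diagonal entries directly produce $\sum_t\tfrac{1}{\sqrt t}\sum_j\sum_{i\neq I^*_j}\sqrt{\xtij}$, and a Jensen step $\BE[\sqrt{\xtij}]\le\sqrt{\BE[\xtij]}$ gives the stated $\sqrt{\BE[\xtij]}$; the extra $\BE[\xtij]$ in the claim is slack carried along so that it later cancels the $-\BE[\xtij]$ from $R_{pen}$ in Eq.~\eqref{eq:pen0}, leaving the clean $3\sqrt{\BE[\xtij]}$ of Theorem~\ref{thm:adv}. The main obstacle is the diagonal: a crude $\sqrt{x_{t,I^*_j,j}}\le 1$ bound only yields $\sum_t 2\eta_t=\Theta(\sqrt T)$ per position, which is too weak for the claimed $O(m\log T)$. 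The natural route is to keep the sharper loss-aware form $2\eta_t\sqrt{x_{t,I^*_j,j}}\,\ell_{t,I^*_j,j}^2$ and exploit the trade-off that when the optimal coordinate carries almost all of the column mass its loss-estimate contribution is small, whereas otherwise a substantial complementary mass $1-x_{t,I^*_j,j}=\sum_{i\neq I^*_j}\xtij$ reappears in the off-diagonal local norm and absorbs it, so that only a $\Theta(\log T)$ harmonic residue survives; rounds $t\le 3$ are then bounded by constants, accounting for the additive $3m$ and the shift of the summation index to $t\ge 4$. Reconciling this genuinely $\sqrt T$-looking diagonal term with the target $O(m\log T)$ is the step I expect to require the most care.
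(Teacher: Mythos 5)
Your reduction of $R_{stab}$ to one-step Bregman divergences, the passage from $\Phi_t$ to $\tPhi_t$, and the application of Lemma \ref{bregmanDiv1} with the separable conjugate $\tPsi_t^*$ all match the paper's argument. But there is a genuine gap exactly where you flag it: the diagonal coordinates $i=I^*_j$. Your proposed resolution does not work. With the unshifted estimator, the per-coordinate term after taking $\BE_t$ is $2\eta_t\sqrt{\xtij}\,\ell_{t,i,j}^2$, and your claim that the loss-estimate contribution is small when the optimal coordinate carries almost all of the column mass is false: in the adversarial setting $\ell_{t,I^*_j,j}$ may equal $1$ in every round, so when $x_{t,I^*_j,j}\to 1$ the diagonal term is $\approx 2\eta_t=1/\sqrt{t}$, which sums to $\Theta(\sqrt{T})$ per position, and there is no complementary off-diagonal mass left to absorb it. This is fatal not merely for constants: the lemma must bound $R_{stab}$ using only coordinates $i\neq I^*_j$, because Theorem \ref{thm:sto} feeds this bound into the self-bounding constraint of Lemma \ref{lem:self}, and any leftover $\sqrt{T}$-type term not controlled by $\BE[\xtij]$, $i\neq I^*_j$, would destroy the $O(\log T)$ stochastic regret.

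The missing idea is the loss-shift (centering) device of the paper's Lemma \ref{lem:stab1}: before invoking Lemma \ref{bregmanDiv1}, subtract from every entry of column $j$ the constant $w_j=\ell_{t,I_{t,j},j}$, the \emph{observed} loss of the item actually placed at position $j$. The shift is legitimate because $\sum_{i}x_{i,j}=1$ on $\Conv(\cX)$, so a column-constant shift moves $\langle x_t,\hl_t\rangle$ and the $\tPhi_t$-difference by the same amount, and the hypothesis $\eta_t(\ltij-w_j)\sqrt{\xtij}\geq -\tfrac14$ holds once $t\geq 4$ (this, not a generic crude bound, is the reason for the $3m$ from rounds $t\leq 3$ and the index shift). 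After centering, the selected coordinate contributes $\xtij^{3/2}(\hl_{t,i,j}-\ell_{t,i,j})^2\leq \xtij^{-1/2}(1-\xtij)^2$ while unselected ones contribute $\xtij^{3/2}w_j^2\leq\xtij^{3/2}$; averaging over the sampling of $X_t$ gives exactly $\sqrt{\xtij}\,(1-\xtij)$ per coordinate. The factor $(1-\xtij)$ is what kills the diagonal: for $i=I^*_j$ one uses $\sqrt{\BE[\xtij]}(1-\BE[\xtij])\leq 1-\BE[\xtij]=\sum_{i'\neq I^*_j}\BE[x_{t,i',j}]$, which is precisely where the $+\BE[\xtij]$ term in the lemma comes from --- it is the redistributed diagonal mass, not slack carried along for later cancellation. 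Finally, the $2m\log T$ is not a harmonic residue of any diagonal trade-off; it is $\sum_{t}8\eta_t^2m=\sum_t 2m/t$, the cubic remainder term in Lemma \ref{lem:stab1}.
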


%%%%%%%%%%%%%%%%%%%%%%%%%%%%%%%%%%%%%%%%%%%%%%%%%%%%%%%%%%%%%%%%%%%%%%%%%%
% \zhao{

\begin{lemma}\label{lem:stab1}
    For any $w_i\in\BR$,
    if $\eta_t(\ltij-w_i)\xtij^{\frac{1}{2}}\geq -\frac{1}{4}$, then
    \begin{align*}
        \langle x_t,\hl_t\rangle + \Phi_t(-\hL_t)-\Phi_t(-\hL_{t-1})\leq \sum_{i=1}^n\sum_{j=1}^m\left[2\eta_t\xtij^{\frac{3}{2}}(\ltij-w_i)^2+8\eta_t^2\xtij^2|w_i-\ltij|_+^3\right]\,,
    \end{align*}
    where $|z|_+=\max\{z,0\}$.
\end{lemma}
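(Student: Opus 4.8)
The plan is to recognize the left-hand side as a single-step Bregman divergence, pass to the Fenchel conjugate where the regularizer is separable with an explicit closed form, and then carry out a one-dimensional estimate per coordinate. First I would use Eq.~\eqref{eq:dual}, which gives $x_t=\nabla\Phi_t(-\hL_{t-1})$, together with $\hL_t=\hL_{t-1}+\hl_t$, to rewrite the stability increment as the Bregman divergence $\langle x_t,\hl_t\rangle+\Phi_t(-\hL_t)-\Phi_t(-\hL_{t-1})=D_{\Phi_t}(-\hL_t,-\hL_{t-1})$. Since $\tPsi_t=\eta_t^{-1}(\Psi+\langle\cdot,\vone\rangle)$ and $\langle x,\vone\rangle=m$ on $\Conv(\cX)$, the functions $\Phi_t$ and $\tPhi_t$ differ only by the constant $\eta_t^{-1}m$, so their divergences coincide: $D_{\Phi_t}=D_{\tPhi_t}$. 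At this point I would introduce the offset $w$, relying on the fact that the loss enters the leader only through $\langle x,\cdot\rangle$ with $x\in\Conv(\cX)$, so the increment $\hl_t$ may be re-centered to $\hl_t-w$ without changing $x_t$ or the divergence. Applying Lemma~\ref{bregmanDiv1} with $L=-\hL_{t-1}$ and the re-centered increment then bounds the quantity by the separable conjugate divergence $D_{\tPsi_t^*}(\tL-(\hl_t-w),\tL)$, where $\tL=\nabla\tPsi_t(x_t)$ and $\tPsi_t^*(y)=\sum_{i,j}g(y_{i,j})$ with $g(y)=\tfrac{1}{4\eta_t(1-\eta_t y)}$.

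The computation then reduces to the scalar function $g$. Inverting $x_t=\nabla\tPsi_t^*(\tL)$ gives the crucial relation $1-\eta_t\tL_{i,j}=\tfrac12\xtij^{-1/2}$. Writing $c=1-\eta_t\tL_{i,j}$, $v=\ltij-w_i$, and $c'=c+\eta_t v$, a direct evaluation of the one-dimensional Bregman divergence collapses to the closed form $D_g(\tL_{i,j}-v,\tL_{i,j})=\tfrac{\eta_t v^2}{4c^2c'}=\tfrac{\eta_t\xtij\,v^2}{c'}$, using $\tfrac1{4c^2}=\xtij$. The hypothesis $\eta_t(\ltij-w_i)\xtij^{1/2}\ge-\tfrac14$ is exactly the statement $\eta_t|v|\le\tfrac14\xtij^{-1/2}=\tfrac c2$, which keeps the evaluation point inside $\dom\tPsi_t^*$ by guaranteeing $c'\ge\tfrac c2>0$.

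I would finish by splitting on the sign of $v$. If $v\ge0$, then $|w_i-\ltij|_+=0$ and $c'\ge c$, so $D_g\le\tfrac{\eta_t\xtij v^2}{c}=2\eta_t\xtij^{3/2}v^2$, which is exactly the quadratic term. If $v<0$, subtracting the same quadratic term leaves $D_g-2\eta_t\xtij^{3/2}v^2=\eta_t\xtij v^2(\tfrac1{c'}-\tfrac1c)=\tfrac{\eta_t^2\xtij|v|^3}{cc'}$; since $c=\tfrac12\xtij^{-1/2}$ and $c'\ge\tfrac c2$, one gets $cc'\ge\tfrac18\xtij^{-1}$, so this remainder is at most $8\eta_t^2\xtij^2|v|^3=8\eta_t^2\xtij^2|w_i-\ltij|_+^3$. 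Summing the per-coordinate bounds over $(i,j)$ yields the claim, and I would emphasize that the constant $8$ is forced rather than wasteful, arising precisely from the worst case $c'=c/2$ allowed by the hypothesis.

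The hard part is the reduction step, not the calculus: one must argue that the offset $w$ can be inserted without changing the quantity being bounded, and that the re-centered increment keeps the conjugate evaluation point strictly inside $\dom\tPsi_t^*$, so that Lemma~\ref{bregmanDiv1} and the convexity underlying it remain valid even when $\hl_t-w$ fails to be sign-definite. The condition $\eta_t(\ltij-w_i)\xtij^{1/2}\ge-\tfrac14$ is engineered to do both at once, since $c'\ge c/2$ simultaneously licenses the conjugate bound and pins down the constants in the quadratic and cubic terms; everything downstream is the elementary one-dimensional identity above.
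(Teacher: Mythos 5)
Your proposal is correct and follows essentially the same route as the paper's proof: re-center the increment by the constant-per-position offset $w$ (valid because every $x\in\Conv(\cX)$ has unit column sums, so $\langle x,w\rangle$ is the same for all feasible $x$), pass from $D_{\tPhi_t}$ to the separable conjugate divergence $D_{\tPsi_t^*}$ via Lemma \ref{bregmanDiv1}, and evaluate coordinate-wise --- your closed form $\eta_t \xtij v^2/c'$ is exactly the paper's $2\eta_t\xtij^{3/2}v^2\left(1+2\eta_t v\,\xtij^{1/2}\right)^{-1}$, and your sign-split on $v$ reproduces the paper's bound on that inverse factor, yielding the same quadratic and cubic terms with the same constants. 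One minor imprecision: the hypothesis is one-sided (it lower-bounds $\eta_t v\,\xtij^{1/2}$ only), not the two-sided statement $\eta_t|v|\le c/2$ you assert, but since your argument only ever uses $c'\ge c/2$ (and $c'\ge c$ when $v\ge 0$), nothing downstream is affected.
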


%-\inner{\nabla\Phi_t(-\hL_{t-1})}{-\hl_t+w} 
%-\inner{\nabla\Psi^*_t(\nabla(\Psi_t(x_t)}{-\hl_t+w}

\begin{proof}
    % For any $w=\begin{bmatrix} w_1 \vone_m^\top \\ \vdots \\ w_n \vone_m^\top \end{bmatrix}$, 
    For any $w=\begin{bmatrix}w_1\vone_n,w_2\vone_n,\cdots, w_m\vone_n\end{bmatrix}\in\BR^{n\times m}$, 
    we have
    \begin{align}\label{eq:stab3}
        &\langle x_t,\hl_t\rangle + \Phi_t(-\hL_t)-\Phi_t(-\hL_{t-1})\notag\\
        =&\langle x_t,\hl_t\rangle + \tPhi_t(-\hL_t)-\tPhi_t(-\hL_{t-1})\notag\\
        =&\langle x_t,\hl_t-w\rangle + \tPhi_t(-\hL_t+w)-\tPhi_t(-\hL_{t-1})\notag\\
        =&\langle x_t,\hl_t-w\rangle + \tPhi_t(-\hL_{t-1}-\hl_t+w)-\tPhi_t(-\hL_{t-1})\notag\\
        \overset{\text{(a)}}{=}&D_{\tPhi_t}(-\hL_{t-1}-\hl_t+w,-\hL_{t-1})\notag\\
        \leq& D_{\tPsi_t}(\nabla\tPsi_t(x_t)-\hl_t+w,\nabla\tPsi_t(x_t))\notag\\
        \overset{\text{(b)}}{=}& \langle x_t,\hl_t-w\rangle + \tPsi_t^*(\nabla\tPsi_t(x_t)-\hl_t+w)-\tPsi_t^*(\nabla\tPsi_t(x_t))\notag\\
        =& \sum_{i=1}^n\sum_{j=1}^m\left[ \xtij(\ltij-w_i)+\frac{1}{2\eta_t}(\xtij^{-\frac{1}{2}}+2\eta_t(\ltij-w_i))^{-1}-\frac{1}{2\eta_t}\xtij^{\frac{1}{2}}\right]\notag\\
        =& \sum_{i=1}^n\sum_{j=1}^m\left[\frac{1}{2\eta_t}\xtij^{\frac{1}{2}}\left(2\eta_t\xtij^{\frac{1}{2}}(\ltij-w_i) + (1+2\eta_t(\ltij-w_i)\xtij^{\frac{1}{2}})^{-1}-1\right)\right]\notag\\
        =& \sum_{i=1}^n\sum_{j=1}^m\left[2\eta_t\xtij^{\frac{3}{2}}(\ltij-w_i)^2(1+2\eta_t(\ltij-w_i)\xtij^{\frac{1}{2}})^{-1}\right]\,,
    \end{align}
where 
the first two equality comes from %$\inner{x_t}{w}=\sum^m_{j=1}w_i(\sum^n_{i=1}x_{t,i,j})=\sum^m_{j=1}w_i$ 
$\sum^n_{i=1}x_{t,i,j}=1$, equality (a) and (b) is by Eq.\eqref{eq:dual}
and the inequality is due to
%Eq.\eqref{eq:legendre}, Eq.\eqref{eq:dual}
% $\nabla\Psi^*_t(\nabla\Psi_t(x_t))=x_t=\nabla\Phi_t(-\hat{L}_{t-1})$
Lemma \ref{bregmanDiv1}. 

Since we have $\eta_t(\ltij-w_i)\xtij^{\frac{1}{2}}\geq -\frac{1}{4}$ for any $i\in[n]$ and $j\in[m]$, it can be obtained that $(1+2\eta_t(\ltij-w_i)\xtij^{\frac{1}{2}})^{-1}\leq 2$. Thus, 
\begin{align} \label{eq:stab4}
    (1+2\eta_t(\ltij-w_i)\xtij^{\frac{1}{2}})^{-1}=& 1-2\eta_t(\ltij-w_i)\xtij^{\frac{1}{2}}(1+2\eta_t(\ltij-w_i)\xtij^{\frac{1}{2}})^{-1} \nonumber\\
    \leq& 1+4\eta_t|w_i-\ltij|_+\xtij^{\frac{1}{2}}\,.
\end{align}

Substituting Eq.(\ref{eq:stab4}) into Eq.(\ref{eq:stab3}) concludes the proof. 

\end{proof}

We are now ready to prove Lemma \ref{lem:stability}.
\begin{proof}[Proof of Lemma \ref{lem:stability}]
By the unbiasedness of $\hl_t$ and the tower rule, one can see that
\begin{align}
    \BE[\langle X_t,\ell_t\rangle]=\BE[\BE_t[\langle X_t,\ell_t\rangle]]=\BE[\BE_t[\langle x_t,\ell_t\rangle]]=\BE[\BE_t[\langle x_t,\hl_t\rangle]]\,.\label{eq:tower}
\end{align}

Let $w_j=\ell_{t,I_{t,j},j}$. Lemma \ref{lem:stab1} and Eq.\eqref{eq:tower} show that
\begin{align*}
    &\BE\left[\langle X_t,\ell_t\rangle + \Phi(-\hL_t)-\Phi(-\hL_{t-1})\right]\\
\leq& \BE\left[\sum_{i=1}^n \sum_{j=1}^m 2\eta_t\xtij^{\frac{3}{2}}(\ltij-\ell_{t,i,I_{t,i}})^2+8\eta_t^2\xtij^2|w_i-\ltij|_+^3 \right]\\
\leq& 2\eta_t\sum_{j=1}^m \BE\left[ x_{t,I_{t,j},j}^{\frac{3}{2}}(\hl_{t,I_{t,j},j}-\ell_{t,I_{t,j},j})^2+ \sum_{i:i\neq I_{t,j}} \xtij^{\frac{3}{2}}(\ltij-\ell_{t,I_{t,j},j})^2\right]+8\eta_t^2m\\
\leq& 2\eta_t\sum_{j=1}^m\BE\left[ x_{t,I_{t,j},j}^{-\frac{1}{2}}(1-x_{t,I_{t,j},j})^2+ \sum_{i:i\neq I_{t,j}} \xtij^{\frac{3}{2}}\right]+8\eta_t^2m\\
=& 2\eta_t\sum_{i=1}^n\sum_{j=1}^m\BE\left[ \xtij^{\frac{1}{2}}(1-\xtij)^2+ (1-\xtij)\xtij^{\frac{3}{2}}\right]+8\eta_t^2m\\
=& 2\eta_t\sum_{i=1}^n\sum_{j=1}^m\BE\left[ \xtij^{\frac{1}{2}}(1-\xtij)\right]+8\eta_t^2m\\
\leq& 2\eta_t\sum_{i=1}^n\sum_{j=1}^m \BE[\xtij]^{\frac{1}{2}}(1-\BE[\xtij])+8\eta_t^2m\,,
\end{align*}
where 
the second inequality is due to
%$\sum^n_{i=1}\sum^m_{j=1}\xtij^2|w_i-\ltij|_+^3\leq \sum^n_{i=1}\sum^m_{j=1}\xtij^2\leq \sum^n_{i=1}\sum^m_{j=1}\xtij= m$
$\sum^n_{i=1}\sum^m_{j=1}\xtij^2|w_i-\ltij|_+^3\leq m$.
% The third inequality is due to the definition of $\hl_t$.
The third inequality comes from $\ltij=0$ for $i\neq I_{t,j}$ and
\begin{align*}
    \left(\hl_{t,I_{t,j},j}-\ell_{t,I_{t,j},j}\right)^2&=\left(\frac{\ell_{t,I_{t,j},j}\mathds{1}{\{X_{t,I_{t,j},j}=1\}}}{\x_{t,I_{t,j},j}}-\ell_{t,I_{t,j},j}\right)^2\\
    &=\ell_{t,I_{t,j},j}^2\left(\frac{1}{\x_{t,I_{t,j},j}}-1\right)^2\\
    &\leq \left(\frac{1}{\x_{t,I_{t,j},j}}-1\right)^2\,.
\end{align*}
And the last inequality follows from Jensen’s inequality.

Thus,
\begin{align}
    &\BE\left[\sum_{t=1}^T\langle X_t,\ell_t\rangle+\Phi_t(-\hL_t)-\Phi_t(-\hL_{t-1})\right]\notag\\
    \leq& 3m + \BE\left[\sum_{t=4}^T\langle X_t,\ell_t\rangle+\Phi_t(-\hL_t)-\Phi_t(-\hL_{t-1})\right]\notag\\
    \leq& 3m + 2m\log T + \sum_{t=4}^T\left[ 2\eta_t\sum_{i=1}^n\sum_{j=1}^m \BE[\xtij]^{\frac{1}{2}}(1-\BE[\xtij])\right]\,.\label{eq:stab51}
\end{align}

If $i\neq I^*_j$,
\begin{align}
    \sqrt{\BE[\xtij]}(1-\BE[\xtij])\leq \sqrt{\BE[\xtij]}\,.\label{eq:sta61}
\end{align}
If $i=I^*_j$ ,
\begin{align}
    \sqrt{\BE[\xtij]}(1-\BE[\xtij])\leq 1-\BE[\xtij]=\sum_{i:i\neq I^*_j}\BE[\xtij]\,.\label{eq:sta71}
\end{align}

% The aforementioned implies that
% \begin{align*}
%     R_{stab}\leq 3m + 2m\log T + \sum_{t=4}^T\left[ 2\eta_t\sum_{j=1}^m\sum_{i\neq I^*_j} (\sqrt{\BE[\xtij]}+\BE[\xtij])\right]\,.
% \end{align*}
Substituting Eq.\eqref{eq:sta61} and Eq.\eqref{eq:sta71} into Eq.\eqref{eq:stab51} shows that
\begin{align*}
    R_{stab}\leq 3m + 2m\log T + \sum_{t=4}^T\left[ 2\eta_t\sum_{j=1}^m\sum_{i\neq I^*_j} (\sqrt{\BE[\xtij]}+\BE[\xtij])\right]\,,
\end{align*}
which concludes the proof.
\end{proof}

%\zhao{Maybe section B.4 is needed for the proof of Theorem \ref{thm:adv} (proof of Theorem \ref{thm:adv} may not be a part of sec B.3 but proof of Theorem \ref{thm:adv} is a bit short).}
\subsection{Proof of Theorem \ref{thm:adv}}
Now we turn to the proof of Theorem \ref{thm:adv}.

\begin{proof}[Proof of Theorem \ref{thm:adv}]
Like the standard FTRL analysis (see Chapter 28 of \cite{lattimore2020bandit}), the regret can be decomposed as
\begin{align*}
    R(T)=&\underbrace{\BE\left[\sum_{t=1}^T\langle X_t,\ell_t\rangle+\Phi_t(-\hL_t)-\Phi_t(-\hL_{t-1})\right]}_{R_{stab}}+\underbrace{\BE\left[\sum_{t=1}^T-\Phi_t(-\hL_t)+\Phi_t(-\hL_{t-1})-\langle x^*,\ell_t\rangle\right]}_{R_{pen}}\,.\\
    \leq& 3m + 2m\log T + \sum_{t=1}^T\left( \frac{3}{\sqrt{t}}\sum_{j=1}^m\sum_{i\neq I^*_j} \sqrt{\BE[\xtij]}\right)\\
    %\leq& 3m + 2m\log T + 
    % = O(m\sqrt{nT})\,.
\end{align*}
where the first inequality comes from Lemma \ref{lem:penalty} and Lemma \ref{lem:stab1}. According to Cauchy-Schwartz inequality, we have $\sum_{i=1}^n\sqrt{\BE[\xtij]}\leq \sqrt{n}$. Thus,

\begin{align*}
    R(T)\leq 3m + 2m\log T + 3m\sqrt{n}\sum_{t=1}^T\frac{1}{\sqrt{t}} \leq 6m\sqrt{nT}
\end{align*}
where the last inequality comes from the fact 
$\sum_{t=1}^T\frac{1}{\sqrt{t}}\leq 2\sqrt{T}$.

%Summing these two inequalities leads to the resulting regret upper bound. The second bound comes since $\sum_{j=1}^m\sqrt{\BE[\xtij]}\leq \sqrt{n}$.
\end{proof}

\section{Lower Bound}\label{app:lower}
% \begin{theorem}
% Suppose $m$ is the number of positions, $n$ is the number of items. Then for any algorithm there exists a position-based ranking problem such that 

% \begin{align*}
%     \BE[R_T]\geq \frac{1}{16}m\sqrt{nT}\,.
% \end{align*}
% \end{theorem}

\begin{proof}[Proof of Theorem \ref{thm:lower}]
We assume the learner is deterministic in this section.
Let $\cS_{n}(m)$ be the set of all $m$-permutations of $[n]$.
% For each $u\in\cS_{n}(m)$, we define an instance $\ell_u(i,j)$ as 
% \zhao{
For each $u\in\cS_{n}(m)$, we define a loss vector $\ell_u$ as%we define a ranking problem with respect to $u$ whose loss is determined by $\ell_u$ where
% }

\begin{align*}
    \ell_u(i,j)=
    \begin{cases}
      \frac{1}{2}-\Delta & \quad \text{if } u_j=i \\
      \frac{1}{2} & \quad \text{otherwise}\,,
    \end{cases}
\end{align*}
where $0<\Delta<1/2$ is some value to be tuned subsequently. 
% \zhao{For the remainder of the proof we also denote by $\ell_u$ the ranking problem with loss $\ell_u$.} 
% \shuai{For each $u$xx, we define an instance xxx}

% We start by considering a deterministic learner. 
Let $N_{i,j}(t) = \sum_{s=1}^t \bOne{X_{s,i,j}=1}$ be the total number of times that item $i$ is placed at position $j$ by the end of round $t$. We will use subscript $u$ to denote the expectation, probability, regret, etc., under the ranking problem whose loss is determined by $\ell_u$.
Note that $R_u(T)=\Delta\sum_{j=1}^m (T-\EE{u}{N_{u_j,j}(T)})$. Then
\begin{align*}
    \sum_{u\in\cS_{n}(m)}R_u(T)=& \Delta\sum_{u\in\cS_{n}(m)}\sum_{j=1}^m (T-\EE{u}{N_{u_j,j}(T)})\\
=& \Delta\sum_{j=1}^m \sum_{u_{-j}\in\cS_n(m-1)}  \sum_{u_j\in[n]\setminus u_{-j}}(T-\EE{u}{N_{u_j,j}(T)})\,,
\end{align*}
% \shuai{where $u_{-j}$ denotes xx}
% \zhao{
% where $u_{-j}$ denotes a $(m-1)$-permutation of $[n]$.
where $u_{-j}=(u_1,\cdots,u_{j-1},u_{j+1},\cdots,u_m)$ denotes a $(m-1)$-permutation of $[n]$.
% }

% \zhao{
Now fix $j\in[m]$, $u_{-j}\in\cS_n(m-1)$ and $T$.
% } 
% Now fix $j\in[m]$ and $u_{-j}\in\cS_n(m-1)$. \shuai{Fix $T$?} 
Let $J_{j,T}$ be drawn according to the probability $\left(\frac{N_{1,j}(T)}{T},\frac{N_{2,j}(T)}{T},\dots,\frac{N_{n,j}(T)}{T}\right)$.
% \shuai{after round $T$?}. 
% Let $\BP_{j,u}$ be the law of $J_{j,T}$ under the instance of $u$. 
% \zhao{
Let $\BP_{j,u}$ be the law of $J_{j,T}$ under the ranking problem whose loss is determined by $\ell_u$.
% }
Then we have $\BP_{j,u}(J_{j,T}=i)=\EE{u}{\frac{N_{i,j}(T)}{T}}$.
% \shuai{your definition only works for fixed $T$?}

% \zhao{
We define a ranking problem with respect to $u_{-j}$ whose loss is determined by $\ell_{u_{-j}}$ where 
\begin{align*}
    \ell_{u_{-j}}(i,j^\prime)=
    \begin{cases}
       \ell_u(i,j^\prime) & \quad \text{if } j^\prime\neq j \\
      \frac{1}{2} & \quad \text{otherwise}\,.
    \end{cases}
\end{align*}
Similarly, we use subscript $u_{-j}$ to denote the expectation, probability, regret, etc., under 
the ranking problem whose loss is determined by $\ell_{u_{-j}}$. We also denote by $\BP_{j,u_{-j}}$ the law of $J_{j,T}$ when the learner is interacting with the ranking problem determined by $\ell_{u_{-j}}$.
% }
% \shuai{Let $u_{-j}$ be the instance xxx}

% Let $\BP_{j,u_{-j}}$ be the law of $J_{j,T}$ when the learner is interacting with the ranking problem determined by $\ell_{u_{-j}}$, where 
% \begin{align*}
%     \ell_{u_{-j}}(i,j^\prime)=
%     \begin{cases}
%       \ell_u(i,j^\prime) & \quad \text{if } j^\prime\neq j \\
%       \frac{1}{2} & \quad \text{otherwise}\,.
%     \end{cases}
% \end{align*}

By Pinsker's inequality
% \shuai{cite}
% \zhao{
(Chapter 14 of \cite{lattimore2020bandit})
% }
, we have
\begin{align*}
    \BP_{j,u}(J_{j,T}=u_j)\leq  \BP_{j,u_{-j}}(J_{j,T}=u_j)+\sqrt{\frac{1}{2}\KL(\BP_{j,u_{-j}},\BP_{j,u})}\,,
\end{align*}
which means 
\begin{align*}
    \BE_u[N_{u_j,j}(T)]\leq \BE_{u_{-j}}[N_{u_j,j}(T)]+T\sqrt{\frac{1}{2}\KL(\BP_{j,u_{-j}},\BP_{j,u})}\,.
\end{align*}
Thus,
\begin{align}
        \sum_{u_j\in[n]\setminus u_{-j}} \BE_u[N_{u_j,j}(T)]
        \leq& \sum_{u_j\in[n]\setminus u_{-j}}\BE_{u_{-j}}[N_{u_j,j}(T)]+T\sum_{u_j\in[n]\setminus u_{-j}}\sqrt{\frac{1}{2}\KL(\BP_{j,u_{-j}},\BP_{j,u})}\notag \\
        \leq& \sum_{u_j\in[n]}\BE_{u_{-j}}[N_{u_j,j}(T)]+T\sum_{u_j\in[n]\setminus u_{-j}}\sqrt{\frac{1}{2}\KL(\BP_{j,u_{-j}},\BP_{j,u})}\notag\\
        \leq& T+T\sum_{u_j\in[n]\setminus u_{-j}}\sqrt{\frac{1}{2}\KL(\BP_{j,u_{-j}},\BP_{j,u})}\,.\label{eq:lowerDecom}
\end{align}

Let $Y^t=(Y_1,Y_2,\dots,Y_t)\in\{0,1\}^{m\times t}$ be the sequence of feedback received by the learner up to round $t$ and $y^t=(y_1,y_2,\dots,y_t)\in\{0,1\}^{m\times t}$ be a specific value taken by $Y^t$.
% Recall the definition of $Y^t$ and $y^t$ in Section \ref{para:notation}.
% Let $Y^t=(Y_1,Y_2,\dots,Y_t)\in\{0,1\}^{m\times t}$ be the sequence of feedback received by the learner up to round $t$ and $y^t\in\{0,1\}^{m\times t}$ be a specific value taken by $Y^t$. 
Since the learner is deterministic, $Y^t$ uniquely determines $N_{i,j}(t)$ for $j\in[m]$ and $i\in[n]$.
% }
% Since the learner is deterministic, the sequence of feedback $Y^t=(Y_1,Y_2,\dots,Y_t)\in\{0,1\}^{m\times t}$\shuai{be careful for this} uniquely determines $N_{i,j}(t)$ for $j\in[m]$ and $i\in[n]$. Then for each $j\in[m]$ we have $\BP_{j,u}$ \shuai{what is this} \zhao{$\BP_{j,u}$ is the law of $J_{j,T}$ defined above.} conditionally to $Y^T$ is the same for any $u$. 
Specifically,
we have $\BP_{j,u}(\cdot\big|y^T)=\BP_{j,u_{-j}}(\cdot\big|y^T)$. Let $\BP_u^t$ be the law of $Y^t$  when the learner is interacting with the ranking problem determined by $\ell_u$. Then we have 
\begin{align*}
   \BP_{j,u_{-j}}(J_{j,T}=i)=\sum_{y^T}\BP_{j,u_{-j}}(J_{j,T}=i\big|y^T)\BP_{u_{-j}}^T(y^T) \, .
\end{align*}

Thus,
\begin{align*}
    \KL(\BP_{j,u_{-j}},\BP_{j,u})=& \sum_{i\in[n]} \left[\sum_{y^T}\BP_{j,u_{-j}}(J_{j,T}=i\big|y^T)\BP_{u_{-j}}^T(y^T)\right]\log\frac{\sum_{y^T}\BP_{j,u_{-j}}(J_{j,T}=i\big|y^T)\BP_{u_{-j}}^T(y^T)}{\sum_{y^T}\BP_{j,u}(J_{j,T}=i\big|y^T)\BP_{u}^T(y^T)}\\
    \leq& \sum_{i\in[n]}\sum_{y^T}\BP_{j,u_{-j}}(J_{j,T}=i\big|y^T)\BP_{u_{-j}}^T(y^T)\log\frac{\BP_{j,u_{-j}}(J_{j,T}=i\big|y^T)\BP_{u_{-j}}^T(y^T)}{\BP_{j,u}(J_{j,T}=i\big|y^T)\BP_{u}^T(y^T)}\\
    =& \sum_{y^T}\left[\sum_{i\in[n]}\BP_{j,u_{-j}}(J_{j,T}=i\big|y^T)\right]\BP_{u_{-j}}^T(y^T)\log\frac{\BP_{u_{-j}}^T(y^T)}{\BP_{u}^T(y^T)}\\
    =& \KL(\BP_{u_{-j}}^T,\BP_{u}^T)\,,
\end{align*}
where the inequality uses Jensen's inequality and the fact that $f(x_1,x_2)=x_1\log\frac{x_1}{x_2}$ is a convex function whose Hessian matrix $\begin{bmatrix} \frac{1}{x_1} & -\frac{1}{\x_2} \\ -\frac{1}{\x_2} & \frac{x_1}{x_2^2}\end{bmatrix}$ is positive semi-definite.
The second equality comes from $\BP_{j,u}(\cdot\big|y^T)=\BP_{j,u_{-j}}(\cdot\big|y^T)$.

According to the chain rule for KL divergence, we have

\begin{align*}
    & \KL(\BP_{u_{-j}}^T,\BP_{u}^T)\\
    =& \KL(\BP_{u_{-j}}^1,\BP_{u}^1)+\sum_{t=2}^T\sum_{y^{t-1}}\BP_{u_{-j}}^{t-1}(y^{t-1})\KL(\BP_{u_{-j}}^t(\cdot|y^{t-1}),\BP_{u}^t(\cdot|y^{t-1}))\\
    =& \KL(\BP_{u_{-j}}^1,\BP_{u}^1)+\sum_{t=2}^T\left(\sum_{y_{t-1}:i_{t,j}=u_j}\BP_{u_{-j}}^{t-1}(y^{t-1})\KL\left(\frac{1}{2},\frac{1}{2}-\Delta\right)\right.\\
    & \left.+\sum_{y^{t-1}:i_{t,j}\neq u_j}\BP_{u_{-j}}^{t-1}(y^{t-1})\KL\left(\frac{1}{2},\frac{1}{2}\right)\right)\\
    =& \KL\left(\frac{1}{2},\frac{1}{2}-\Delta\right)\BE_{u_{-j}}[N_{u_j,j}(T)]\\
    \leq& 8\Delta^2\BE_{u_{-j}}[N_{u_j,j}(T)]\,.
\end{align*}
% \zhao{We need $\Delta^2<$ xxx}

Then,
\begin{align*}
    \sum_{u_j\in[n]\setminus u_{-j}}\sqrt{\frac{1}{2}\KL(\BP_{j,u_{-j}},\BP_{j,u})} \leq& \sqrt{\frac{n-m+1}{2}\sum_{u_j\in[n]\setminus u_{-j}}\KL(\BP_{j,u_{-j}},\BP_{j,u})}\\
    \leq& \sqrt{(n-m+1)\sum_{u_j\in[n]\setminus u_{-j}}4\Delta^2\BE_{u_{-j}}[N_{u_j,j}(T)]}\\
    =& 2\Delta\sqrt{(n-m+1)T}\,.
\end{align*}
Recall that we have assumed $n\geq \max\{m+3, 2m\}$. Choosing $\Delta=\frac{1}{8}\sqrt{\frac{n-m+1}{T}}$ and substituting the above into Eq.\eqref{eq:lowerDecom} shows that
\begin{align*}
    \sum_{u_j\in[n]\setminus u_{-j}} \BE_u[N_{u_j,j}(T)]\leq \frac{(n-m+1)T}{2}\,.
\end{align*}
% $$.
% \shuai{needs $n-m \ge xx$}. 
% \zhao{We need $n- m\geq 3$.}

% Let $\Delta=\frac{1}{8}\sqrt{\frac{n-m+1}{T}}$, we have 
% $\sum_{u_j\in[n]\setminus u_{-j}} \BE_u[T_{u_j,j}(T)]\leq \frac{(n-m+1)T}{2}$.

Thus,
\begin{align}
    \sum_{u\in\cS_n(m)}R_u(T)
=& \Delta\sum_{j=1}^m\sum_{u_{-j}\in\cS_{n}(m-1)}\sum_{u_j\in[n]\setminus u_{-j}} (T-\BE_u[N_{u_j,j}(T)])\notag\\
\geq& \frac{n!}{2(n-m+1)!}\Delta m(n-m+1)T\notag\\
=& \frac{n!}{16(n-m)!}m\sqrt{(n-m+1)T}\,.\label{eq:lower10}
\end{align}

Since $|\cS_n(m)|=\frac{n!}{(n-m)!}$ holds, there exists an $u\in\cS_n(m)$ such that 
\begin{align*}
    R_u(T)\geq \frac{1}{16}m\sqrt{(n-m+1)T}\,.
\end{align*}

% \zhao{We now extend the above results to randomized learners. Let $\BE_r[\cdot]$ be the expectation with respect to the learner's internal randomization. The  Fubini’s Theorem (Chap. 2 of \citet{lattimore2020bandit}) shows that 
% \begin{align*}
%         \sum_{u\in\cS_n(m)}R_u(T)
% =& \Delta\sum_{j=1}^m\sum_{u_{-j}\in\cS_{n}(m-1)}\sum_{u_j\in[n]\setminus u_{-j}} (T-\BE_u\left[ \BE_r[ N_{u_j,j}(T) ] \right])\\
% =& \BE_r\left [ \Delta\sum_{j=1}^m\sum_{u_{-j}\in\cS_{n}(m-1)}\sum_{u_j\in[n]\setminus u_{-j}} (T-\BE_u\left[  N_{u_j,j}(T)  \right]) \right]\,.
% \end{align*}
% Taking the lower bound on $\Delta\sum_{j=1}^m\sum_{u_{-j}\in\cS_{n}(m-1)}\sum_{u_j\in[n]\setminus u_{-j}} (T-\BE_u\left[  N_{u_j,j}(T)  \right])$ in Eq.\eqref{eq:lower10} to the random bits of each randomized learner concludes the proof.
% }

\end{proof}

\section{Omitted Details for Algorithm \ref{alg:ftrlpbm}}\label{app:opt}

This section discusses the remaining issue in the implementation of Algorithm \ref{alg:ftrlpbm}:  the sampling rule for choosing actions.

\subsection{Decomposition of Subpermutation Matrices}

The sampling of $X_t$ (line 4 of Algorithm \ref{alg:ftrlpbm}) requires finding a distribution over the action set $\cX$ with mean $x_t$. Specifically, we wish to express $x_t$ as a convex combination of a group of $n\times m$ subpermutation matrices. Our method is presented in Algorithm \ref{alg:decomposition}. %\citet{kale2010non} provide a method to find the decomposition of $x$ in polynomial time. The method is presented as follows.
%Since the coefficients of convex combination are non-negative and sum to 1, one of the sampling schemes of $X_t$ is to sample from a group of $n\times m$ permutation matrices whose convex combination is $x_t$. 
%Actually, it can be shown that such a decomposition of any $x\in \Conv(\cX)$ exists and we could find the decomposition of $x$ in polynomial time.
Following \citet{kale2010non}, we first complete $x_t$ into a doubly stochastic matrix $W\in \BR^{n\times n}$ satisfying each element $W_{i,j}\geq 0$ and $\sum^n_{j=1}W_{i,j}=\sum^n_{j=1}W_{j,i}=1$ for any $i\in [n]$ in $O(n^2)$ time (Line \ref{alg:decomposition:l1} in Algorithm \ref{alg:decomposition}). 
Then we apply the Algorithm 1 in \cite{helmbold2009learning} (Line 2-8 in Algorithm \ref{alg:decomposition})
to decompose the doubly stochastic matrix $W$ into its convex combination.
At each iteration, Algorithm \ref{alg:decomposition} finds a permutation matrix $\Pi^k$ such that $W_{i,\Pi^k(i)}>0$ for any $i\in [n]$. 
This could be viewed as the problem of maximal matching in a bipartite graph where edge $e_{i,j}$ has weight $W_{i,j}$ and could be solved by the Hopcroft–Karp algorithm at the cost of $O(n^{2.5})$ time. 
\citet{helmbold2009learning} further show that the doubly stochastic matrix $W$ can be decomposed into the convex combination of at most $n^2-2n+2$ permutation matrices.
Consequently, there are at most $O(n^2)$ iterations on Line \ref{alg3:line2} of Algorithm \ref{alg:decomposition} and Algorithm \ref{alg:decomposition}
 will end in $O(n^{4.5})$ time.

\begin{algorithm}[tbh!]
\caption{Permutation Matrix Decomposition (on Line 4 of Algorithm \ref{alg:ftrlpbm})}
\label{alg:decomposition}
\textbf{Input}: $x_t$
\begin{algorithmic}[1]
\STATE Let $W \in \BR^{n\times n}$ where $\forall i\in [n]$, $W_{i,j}=x_{t,i,j}$ if $1\leq j\leq m$ and $W_{i,j}=\frac{1}{n-m}\left(1-\sum^m_{j=1} x_{t,i,j}\right)$ if $m< j\leq n$. Let $k=0$. \label{alg:decomposition:l1}
\WHILE{$\|W\|_1\neq 0$} \label{alg3:line2}
    \STATE $k:=k+1$.
    \STATE Find a permutation $\Pi^k$ such that $W_{i,\Pi^k(i)}$ is positive for any $i\in [n]$.
    \STATE $\gamma_k := \min_{i\in [n]} W_{i,\Pi^k(i)}$.
    \STATE $W:= W-\gamma_k \Pi^k$.
\ENDWHILE
\STATE Randomly sample and return a permutation $\Pi\in \{\Pi^1,\cdots,\Pi^k \}$ with probabilities $\{\gamma_1,\cdots,\gamma_k \}$.
\end{algorithmic}
\end{algorithm}

\section{Experiments on Real-world Data}\label{app:real_data}
\begin{figure*}[tbh!]
\centering
\includegraphics[width=0.9\textwidth]{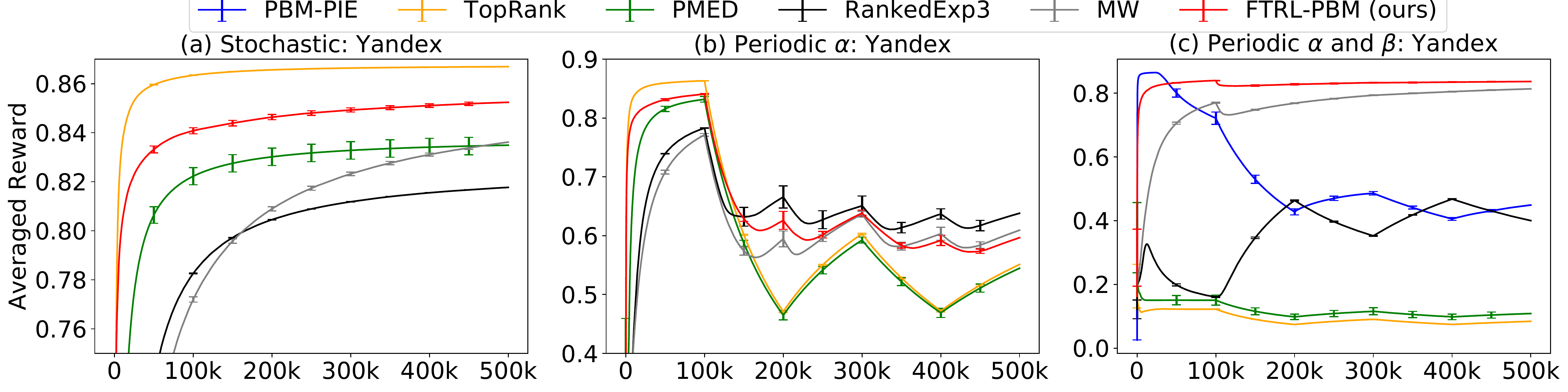}
\caption{This figure shows empirical comparisons between our $\ftrlpbm$ and \texttt{TopRank}, \texttt{PBM-PIE}, \texttt{PMED}, \texttt{RankedExp3} and \texttt{MW} in stochastic and periodic environments constructed by real-world data. 
% We adopt the metric of averaged rewards, which are the cumulative rewards divided by time $t$. All results are averaged over $10$ random runs and the error bars are standard errors which are standard deviations divided by $\sqrt{10}$.
The metric is the averaged rewards and the error bars are computed in the same way as in Fig.\ref{fig:exp}.
% \vspace{-0.1cm}
}
\label{fig:exp2}
\end{figure*}
% \paragraph{Real-world data}
This section presents the experimental results on real-world data. 
% We construct the same stochastic and adversarial environments
% as in Section \ref{sec:exp}.
The real-world experiment is conducted on the \emph{Yandex} dataset \cite{yandex}. 
There are total $167$m search queries where each query is associated with $10$ items, 
\emph{i.e.} URL links, on $10$ positions. We first take $10$ most frequent items and then take the queries which only consist of these $10$ items. As a result, we get $557,574$ queries. Then we use the EM algorithm \cite{dempster1977maximum} to extract the item attractiveness and position examination probabilities like \cite{lagree2016multiple,komiyama2017position}. 
The resulting item attractiveness are 
$\alpha=(0.894, 0.231, 0.139, 0.0745, 0.0585, 0.0424, 0.0237, 0.0234, 0.0231, 0.0178)$ and the position examination probabilities are $\beta=(0.891, 0.227, 0.0778, 0.0412, 0.0378)$. Like synthetic data, we also construct stochastic and periodic environments. The results are shown in Fig.\ref{fig:exp2}(a)(b)(c). The performances of \texttt{PBM-PIE} are roughly $0.2$, far below others, and fall out of the shown ranges in (a)(b). The results are similar to those in synthetic settings, showing the simultaneous learning abilities of our algorithm $\ftrlpbm$.
\end{document}